\newtheorem{theorem}{Theorem}
\newcommand{\tick}{\ding{51}} 
\newcommand{\cross}{\ding{55}} 
\begin{document}

\author{Shahin Hakemi, Naveed Akhtar,~\IEEEmembership{Member,~IEEE}, Ghulam Mubashar Hassan,~\IEEEmembership{Senior Member,~IEEE}, Ajmal~Mian,~\IEEEmembership{Senior Member,~IEEE}
\thanks{Shahin Hakemi, Ghulam Mubashar Hassan, and Ajmal Mian are with the Department of Computer Science and Software Engineering, University of Western Australia, Perth, WA 6009, Australia (e-mail:
shahin.hakemi@research.uwa.edu.au; ghulam.hassan@uwa.edu.au; ajmal.mian@uwa.edu.au).}
\thanks{Naveed Akhtar is with the School of Computing and Information Systems, University of Melbourne, Melbourne, VIC 3010, Australia (e-mail: naveed.akhtar1@unimelb.edu.au).}
\thanks{Manuscript received April 19, 2021; revised August 16, 2021.}}

\title{Single-weight Model Editing for Post-hoc Spurious Correlation Neutralization}
\IEEEtitleabstractindextext{
\begin{abstract}
\justifying
Neural network training tends to exploit the simplest features as shortcuts to greedily minimize training loss. However, some of   these features might be spuriously correlated with the target labels, leading to incorrect predictions by the model. Several methods have been proposed to address this  issue. Focusing on suppressing the spurious correlations with model training, they  not only incur additional training cost, but also have  limited practical utility as the model misbehavior due to spurious relations is usually discovered after its deployment. It is also often overlooked that spuriousness is a subjective notion. Hence, the precise questions that must be investigated are; to what degree a feature is spurious, and how we can {\em proportionally} distract the model's attention from it for reliable prediction. To this end, we propose a method that enables post-hoc neutralization of spurious feature impact, controllable to an  arbitrary degree. We conceptualize spurious features  as fictitious sub-classes  within the  original classes, which can be eliminated by a class removal scheme. We then propose a  unique precise class removal technique that makes a single-weight modification, which entails negligible performance compromise for the remaining classes. We perform extensive experiments, demonstrating that by editing just a single weight in a post-hoc manner, our method achieves highly competitive, or better performance against the state-of-the-art methods. \textit{Our implementation will be made public after acceptance.}
\end{abstract}

\begin{IEEEkeywords}
model editing, machine unlearning, spurious correlation mitigation, post-hoc algorithms
\end{IEEEkeywords}}

\maketitle
\IEEEdisplaynontitleabstractindextext
\section{Introduction}
\label{sec:intro}
\IEEEPARstart{A}{rtificial} 
Neural Networks (ANNs) that employ Empirical Risk Minimization (ERM) \cite{vapnik1999overview}  are prone to  correlate {spurious} features to target labels \cite{geirhos2020shortcut, liu2022avoiding, yang2023mitigating, he2025eva}. Residing in training data, such features often provide shortcuts to minimize loss, causing over-reliance of the model on them for inference~\cite{zhang2022correct, ye2024spurious, han2024improving, zheng2025neurontune}. This leads to poor model generalization. Currently, the prevalent paradigm of suppressing  learning of \textit{spurious feature-target label correlation} - aka \textit{spurious correlation} - is robust model learning \cite{ludan2023explanation, nam2022spread, pagliardini2022agree, labonte2024towards, bombari2025spurious}, which either requires a subsequent model retraining~\cite{liu2021just, asgari2022masktune}, or training the model robustly right from scratch~\cite{deng2024robust, wang2024learning, akhtar2024roboss, kim2024improving, yang2025regulating}.
\begin{figure}[t]
    \centering
    \includegraphics[width=0.4\textwidth]{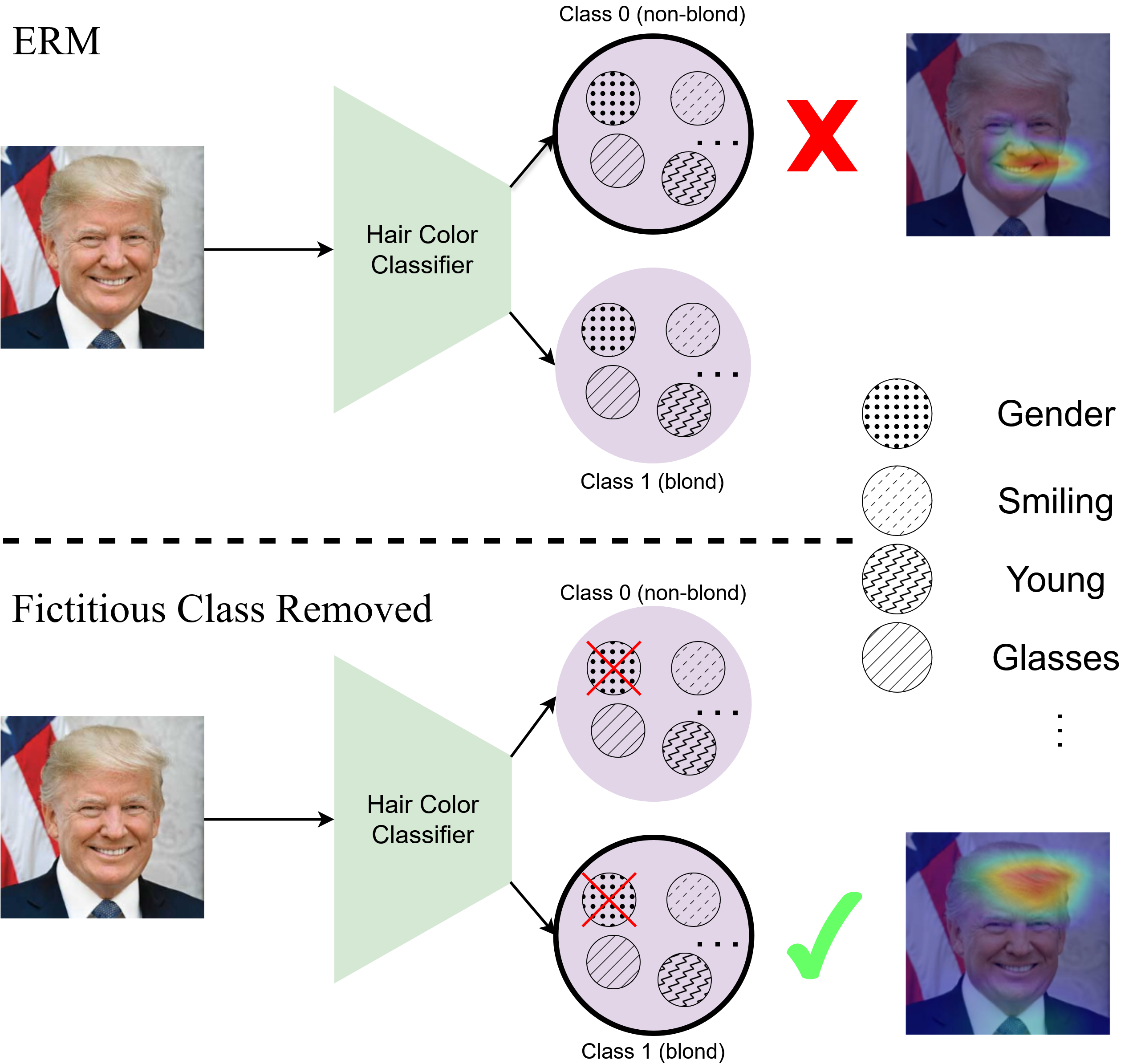}    
    \caption{Illustration of the adopted fictitious class perspective. Top: A non-robust Empirical Risk Minimization (ERM) classifier may rely on incorrect set of high-level features - fictitious sub-classes within a class - to mis-associate male gender to non-blond hair. Bottom: Removing the undesired fictitious (sub-)class from the set enables robust classification.
    }
    \label{fig:fig1} 
    \vspace{-3mm}
\end{figure}
\begin{figure*}[t]
    \centering
    \includegraphics[width=1\textwidth]{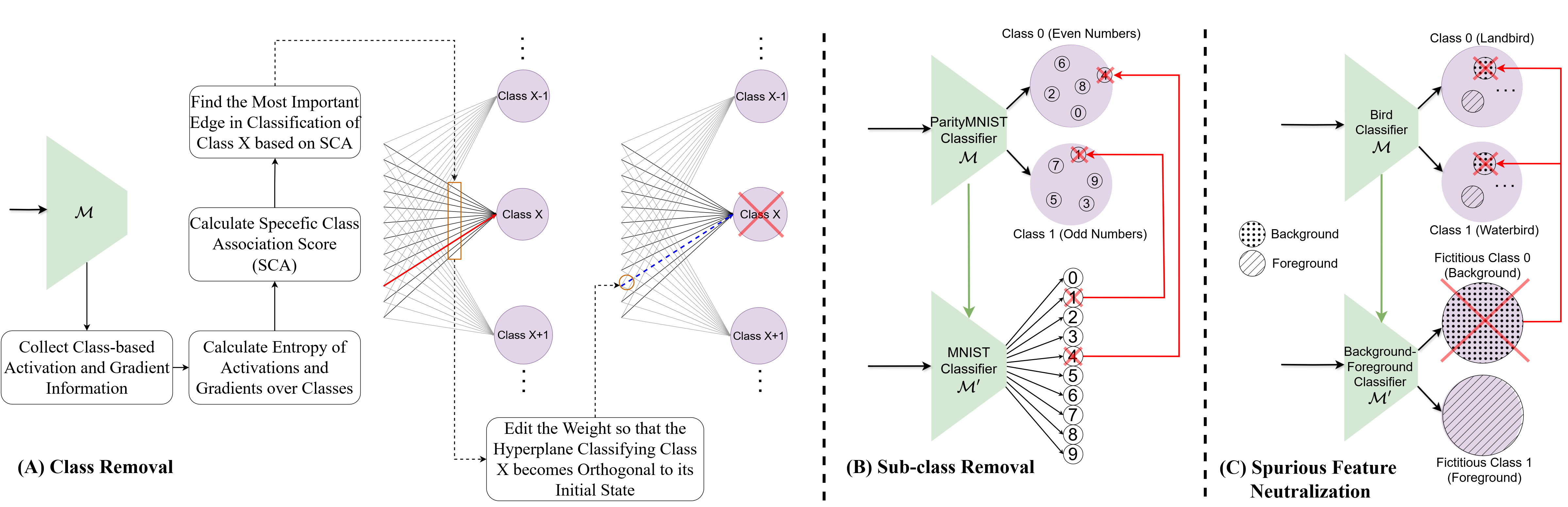} 
    \caption{(A) To remove a specific class, we first compute Specific Class Associate (SCA) score - a metric defined in this work jointly over the activations, gradients and entropy of the neural connections. The SCA score identifies the most important edge (red) for the class under consideration. We replace the weight of that edge with an analytically computed value such that the resulting decision hyperplane becomes orthogonal to its original state around the axis defined by the weight. (B) Any arbitrary sub-class for a model $\mathcal{M}$ can be removed by making a copy $\mathcal{M'}$ of the model and let it classify that sub-class as one of its main classes while all its parameters, except the last layer weights, are frozen. The sub-class can be removed from $\mathcal{M}$ with the same editing as that required for removing it from $\mathcal{M'}$. (C) Spurious features can be treated as fictitious sub-classes and removed using the same process as described in (B).}
    \label{fig:fig2}
\end{figure*}
In any case, existing techniques deal with  spurious correlation suppression in an ante-hoc manner. Leaving alone the viability and computational overhead of model retraining at the user's end;  where the model misbehavior due to spurious correlation is often first manifested, ante-hoc approaches may inadvertently compromise the overall model performance. 
Hence, they have limited practical value. 
Kirichenko et~al.~\cite{kirichenko2022last} showed that non-robust models also learn core/robust data features, albeit they lack strong reliance on them in decision making. This observation inspires us to retain the original learning of the model, thereby focusing on the possibility of post-hoc spurious correlation suppression by distracting a pre-trained model from paying too much attention to the spurious features.  

Another intriguing insight in the literature is presented by Eastwood et~al. \cite{eastwood2024spuriosity}, who argue that so-called spurious features are not entirely harmful. Aligned with \cite{eastwood2024spuriosity}, we posit that some apparent spurious correlations may even help model generalization, provided the right  underlying data distribution. For instance,  (spuriously) correlating a seagull with the background of sea might help correctly recognizing a bird over a sea as a seagull in a typical natural image setting. 
Clearly, \emph{spuriousness}  is a subjective notion, and techniques addressing spurious correlation  should provide the ability to  control the extent to which it can be neutralized for optimal  model performance. Unfortunately, existing  methods generally fail to explicitly account for such a control.   

In this work, we propose a post-hoc technique  that enables neutralizing the contribution of a high-level feature to model prediction by an arbitrary degree. We conceptualize high-level features as fictitious sub-classes within the original class. Our method withdraws model attention from a selected fictitious sub-class to control the model behavior (see Fig. \ref{fig:fig1}). 
Building on a strong theoretical foundation, we introduce a single-weight editing method to unlearn a fictitious class such that our post-hoc editing is applied to the network connection that contributes the most  to the original model behavior for that class. 
To find that connection, we analyze class-specific activations and gradients of the model for the given class. In addition to being a unique post-hoc method to address spurious correlation, our technique also does not require group-annotated samples for editing, as often required by some existing methods~\cite{sagawa2019distributionally, deng2024robust, kirichenko2022last}.

Our main contributions are summarized below.
\begin{enumerate}
\item We propose a first-of-its-kind post-hoc model unlearning technique to address spurious correlations. 
Our method edits only a single model weight to break off prediction reliance  on irrelevant high-level features in the input.
\item We provide theoretical foundations leveraging class activations and model gradients to single out the most significant model weight contributing to its behavior for a given (sub-)class. Editing this weight unlearns the target class with none-to-negligible negative impact on  the model performance for the remaining classes. 
\item Our method operates without group label annotations and, through extensive experiments, demonstrates state-of-the-art or comparable performance in mitigating spurious correlations while being the only post-hoc method.  
\end{enumerate}

\section{Related Work}
We discuss the key existing works by organizing them according to the aspects in which they relate to our approach.

\noindent\textbf{Spurious Correlation Mitigation:} To mitigate spurious correlation, early methods employed distributionally robust optimization (DRO),  which utilizes group annotations to up-weight the worst-group loss during optimization \cite{sagawa2019distributionally, hu2018does, oren2019distributionally, zhang2020coping, lu2025mitigating}. More recently, along similar lines, Deng et~al.~ \cite{deng2024robust} proposed robust model learning with progressive data expansion. Though effective in terms of achieved worst-group performance, its strong dependence on group annotations limits its practicality. To mitigate the issue,  other works proposed using only a limited amount of group annotated data \cite{sohoni2021barack, nam2022spread}. An  extreme scenario is where there is no group annotation available at all. Methods aiming at such a scenario \cite{creager2021environment, zhang2022contrastive, yang2024identifying} try to predict this information during the training process. In \cite{chakraborty2024exmap} utilized explainability heatmaps for clustering the groups. This solves the group label annotations requirement problem, however; similar to all  the methods mentioned above, \cite{chakraborty2024exmap} remains an ante-hoc technique. Recently, \cite{sun2025reverse} aimed mitigation of spurious correlations for unbiased scene graph generation, and \cite{wang2025backmix} explored the negative effect of foreground-background priors in open set recognition.

\noindent\textbf{Machine Unlearning:}  Mainly in response to data privacy protection regulations, Machine Unlearning \cite{bourtoule2021machine} has emerged as a field of study to address unlawful use of data in machine learning models. Although the task is trivial in some machine learning approaches like k-NN, merely requiring data deletion, it is seen as a major challenge in ANNs \cite{foster2024fast, chundawat2023can}. There are two broad approaches to machine unlearning. The first is \textit{Exact Unlearning}, which seeks efficient methods to retrain the model on responsible data to unlearn undesired concepts \cite{cao2015towards}.  \textit{Approximate Unlearning} \cite{NEURIPS2020_b8a65506, Graves_Nagisetty_Ganesh_2021, foster2024fast} aims at making the model as indistinguishable as possible to its counterpart that is trained without the undesired data \cite{kurmanji2024towards}. Although machine unlearning usually aims to undo the effects of some specific data on the model for privacy compliance, there are other recent approaches that also leverage this paradigm for bias mitigation \cite{chen2024fast} and eliminating the effect of corrupted data \cite{goel2024corrective}.

\noindent \textbf{Finding Significant Connections:}  The seminal work of Optimal Brain Damage \cite{lecun1989optimal} motivated the exploration to rank the neural network connections based on their importance in the classification task. This helps in reducing the memory footprint of the model by pruning the unimportant connections, which also leads to  better generalization and faster inference. This research direction is still active, pursuing model efficiency and performance gains by identifying the subsets of most important network connections  to retain  \cite{sunsimple, wu2024auto, khakineed, shi2024towards}. Our work is partially inspired by the counter-objective of seeking the most significant connection \textit{not to keep}, to enable our unique type of unlearning. 

\begin{figure}[t]
    \centering
    \includegraphics[width=0.35\textwidth]{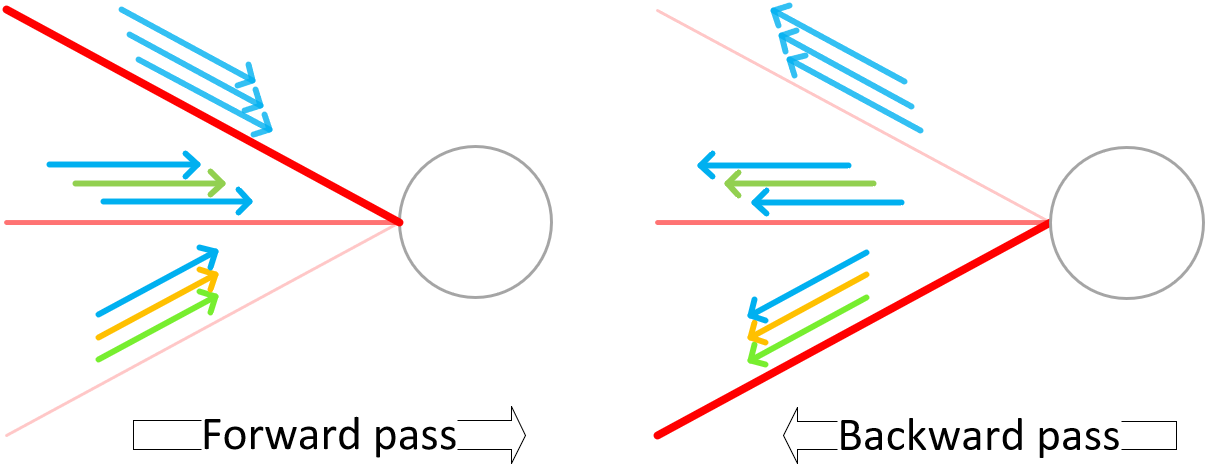} 
    \caption{Illustration of contrastive class association properties in forward and backward passes. Left: The edge receiving high activations for the samples of the same class (\textbf{blue} class) plays a more decisive role in discrimination - \textbf{bold red} edge - than the edges that get high activations for the samples of a higher number of classes (\textbf{green}, \textbf{blue}, and \textbf{orange} classes) - \textbf{pale red} edges. Right: The edge with high magnitude gradients from one class samples (\textbf{blue} class) is less crucial  for discrimination - \textbf{pale red} edges - than those which get higher magnitude gradients from the samples of several classes (\textbf{green}, \textbf{blue}, and \textbf{orange} classes), displayed as the \textbf{bold red} edge. Best viewed in digital form.}   
    \label{fig:forward_backward}
    \vspace{-3mm}
\end{figure}

\section{Problem Definition}
\label{sec:problem}
Consider a neural model representing a classification function 
\(f_{\theta}(.): \mathcal X \rightarrow \mathcal Y\) with parameters \(\theta \in \Theta\) to be trained on a  dataset \(\mathcal{D}_{tr} = \{(x_i, y_i)\}_{i=1}^n\), with training samples \(x_i \in \mathcal{X}\) and their corresponding class labels \(y_i \in \mathcal{Y}\). 
Let us denote a \textit{spurious feature} by \(\alpha \in A\), where \(A\) is the set of all presumed spurious features existing in \(\mathcal{D}_{tr}\). For our problem, a \textit{group} is defined using \(\alpha \in A\) and  \(y \in \mathcal{Y}\) as \(g := (\alpha, y) \in A \times \mathcal Y\) s.t.~\(g \in \mathcal{G}\), where \(\mathcal{G}\) is the set of all groups in \(\mathcal{D}_{tr}\). To suppress spurious correlation, the commonly sought objective \cite{nam2022spread} is to minimize 
\begin{equation}
    \label{eq:loss}
    \mathcal{L}_{worst\_group}(\theta) = \max_{_{g \in \mathcal{G}}} \mathbb{E}_{(x, y, \alpha) \sim P_g}[\ell(f_\theta(x), y)],
\end{equation}
where \(\mathcal{L}_{worst\_group}\) is the worst group loss, \(P_g\) is the group conditioned data distribution and \(\ell(.)\) is the model prediction loss.
Our specific objective is further defined as 
\begin{equation}
    %\theta^* = 
    \underset{\theta^* \in \Theta}{\arg\min}(\mathcal{L}_{worst\_group}(\theta))~\text{s.t.}~|| \theta^* - \theta||_0 \leq \delta.
    \label{eq:obj}
\end{equation}
In Eq.~(\ref{eq:obj}), $\theta^*$ constitutes the sought  vector of the model weights, $||.||_0$ denotes $\ell_0$-pseudo norm that counts the non-zero elements of the vector, and $\delta \in \mathbb Z^+$ is a pre-defined positive integer. In this work, we focus on $\delta = 1$ which enforces changing only a single model weight to suppress the spurious correlation impact on classification.

\section{Methodology}
\label{methodology}
\noindent{\bf Overview:} Figure~\ref{fig:fig2} shows an overview of our method which considers a high-level data feature as a  sub-class  within the actual class. Hypothetically, such features can  be classified by another classifier as its targets, hence their treatment as a class is well-justified. However, since we do not aim to actually classify them, we see them as  \textit{fictitious classes}.
This simple perspective allows us to treat spurious features as classes whose information can potentially be removed from the model by  machine unlearning, thereby enabling neutralization of the undesired spurious correlations previously learned by the model. 

For fictitious class removal, we propose a unique post-hoc technique that makes the hyperplane involved in classifying a fictitious class orthogonal to its original state. Considering our objective in Eq.~(\ref{eq:obj}), this transformation needs precision to ensure minimal changes to the original model. To that end, we restrict our class removal to modifying only a single weight, i.e., $\delta = 1$, that is associated with the most significant connection in the neural network for the fictitious class.  Such a connection should ideally be as exclusive as possible to the concerned class to minimize propagation of the editing effect to other classes. Hence, we also theoretically motivate and justify this exclusiveness for the connection identified in our approach.
%------------------------------------------------------------------------

\vspace{-2mm}
\subsection{Association of Neural Connections to Classes}
Here, we explore to the degree to which a  connection in the neural network contributes to the classification process both generally and class-specifically. The forward and backward passes are analyzed separately. We show that there are contrastive class associative properties in ANNs in the forward and backward passes. 

As formally posited in Theorem \ref{theorem:both} part~(a), in a forward pass, the edges that receive high activations from the samples of a small number of classes are more discriminative compared to those that get similar  activations from a larger number of classes. Based on  part (b) of Theorem \ref{theorem:both}, in backward pass, the edges with  associated high magnitude gradients for  samples from a fewer number of classes are less significant for classification as compared to those that have similar magnitude gradients from the samples of a larger number of classes. We  provide a simplified illustration of the phenomenon in Fig.~\ref{fig:forward_backward} where an extremal case for a three-class scenario is provided.

\begin{theorem}
\label{theorem:both}
Let \(e_{ji}^{(l)}\) and \(e_{qp}^{(l)}\) be two separate edges connecting nodes \(n_{i}^{(l)}\) and \(n_{p}^{(l)}\) in layer \(l\) to nodes \(n_{j}^{(l+1)}\) and \(n_{q}^{(l+1)}\) in layer \(l+1\) for a neural network trained on dataset \(\mathcal{D}_{tr}\), containing  samples \(\mathcal{S} = \{s_1, \dots, s_M\}\) to classify the set into classes \(\mathcal{C} = \{c_1, \dots, c_K\}\).
%, where $M$ and $K$ are the number of samples and the number of classes in $
%\mathcal{D}_{tr}$, respectively. 

\noindent
\textbf{(a)} If in forward passes  \(e_{ji}^{(l)}\) receives high activations for the subset of samples \(\mathcal{S}_1 \subseteq \mathcal{S}\) from classes in \(\mathcal{C}_1 \subseteq \mathcal{C}\) and \(e_{qp}^{(l)}\) receives high activations for samples \(\mathcal{S}_2 \subseteq \mathcal{S}\) from classes in \(\mathcal{C}_2 \subseteq \mathcal{C}\), s.t.~\(|\mathcal{C}_1| > |\mathcal{C}_2|\) while \(|\mathcal{S}_1| = |\mathcal{S}_2|\), then the discriminative contribution of edge \(e_{ji}^{(l)}\) in the  induced model is less than that of the edge \(e_{qp}^{(l)}\).

\begin{figure}[t]
    \centering
    \includegraphics[width=0.45\textwidth]{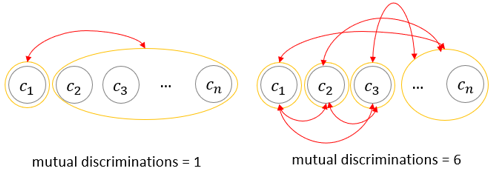} 
    \caption{An edge supports up to \(\binom{n+1}{2}\)  Mutual Discriminations (MDs) for \(n\) classes. Left: When gradient magnitude is large only for \(c_1\). Right: When gradient magnitudes for \(c_1\), \(c_2\), and \(c_3\) are large. We analyze MD to estimate edge contribution to  prediction.}
    \label{fig:mutual_discriminations}
\end{figure}

\noindent
\textbf{(b)} If in backward passes, \(e_{ji}^{(l)}\) receives high magnitude gradients for the set of samples \(\mathcal{S}_1 \subseteq \mathcal{S}\) from classes in \(\mathcal{C}_1 \subseteq \mathcal{C}\) and \(e_{qp}^{(l)}\) receives high magnitude gradients for samples \(\mathcal{S}_2 \subseteq \mathcal{S}\) from classes in \(\mathcal{C}_2 \subseteq \mathcal{C}\), and \(|\mathcal{C}_1| > |\mathcal{C}_2|\) while \(|\mathcal{S}_1| = |\mathcal{S}_2|\), then the contribution of edge \(e_{ji}^{(l)}\) is more than the contribution of edge \(e_{qp}^{(l)}\) in the classification task.
\end{theorem}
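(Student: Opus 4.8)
The plan is to prove the two parts separately, since they rest on opposite monotonicities, and then to note that both follow from treating an edge's role in classification as its participation in pairwise class separations. Throughout I would fix the ``budget'' constraint $|\mathcal{S}_1| = |\mathcal{S}_2|$ as the pivot of the argument: the two edges fire (part~a) or receive gradient (part~b) on the same number of samples, so any difference in their role must come from how those samples are \emph{distributed across classes} $\mathcal{C}_1$ versus $\mathcal{C}_2$.

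For part~(a), I would model the edge's effect on the decision through a linear read-out at the logit layer, so that the score an edge contributes to class $c$ grows with the number of class-$c$ samples that activate it. Since $e_{ji}^{(l)}$ spreads its fixed activation budget $|\mathcal{S}_1|$ over the $|\mathcal{C}_1|$ classes while $e_{qp}^{(l)}$ concentrates the equal budget $|\mathcal{S}_2|$ over the fewer classes $|\mathcal{C}_2|$, the per-class activation density of $e_{qp}^{(l)}$ is strictly larger. I would then quantify ``discriminative contribution'' as the margin the edge induces between an activating class and a non-activating one, show this margin scales with per-class density, and conclude that it is monotonically decreasing in the number of activated classes; hence $e_{ji}^{(l)}$ is less discriminative, as illustrated in Fig.~\ref{fig:forward_backward}. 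I would optionally reinforce this with an entropy / mutual-information statement: with a fixed activation budget, the activation-over-class distribution of $e_{qp}^{(l)}$ has lower entropy (is more class-specific) and thus higher mutual information with the label.

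For part~(b), I would use the Mutual Discrimination (MD) counting of Fig.~\ref{fig:mutual_discriminations}. I would first argue that a large-magnitude gradient w.r.t.\ the edge for class $c$ certifies that the edge's weight is being actively tuned to sharpen a decision boundary touching $c$, i.e.\ the edge participates in separating $c$ from the classes it is confused with. An edge receiving large gradients from $m$ classes then participates in every inter-class boundary among those $m$ classes plus each class-vs-rest boundary, giving $\binom{m}{2} + m = \binom{m+1}{2}$ MDs, which matches the stated maximum of $\binom{n+1}{2}$ at $m = n$. Since $\binom{m+1}{2}$ is strictly increasing in $m$ and $|\mathcal{C}_1| > |\mathcal{C}_2|$, edge $e_{ji}^{(l)}$ supports strictly more MDs and therefore contributes more to the classification task.

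The main obstacle I anticipate is the formalization step rather than the counting: the hypotheses ``high activation'' and ``high magnitude gradient'' are qualitative, so the crux is to commit to quantitative surrogates, namely per-class activation density / induced margin for~(a) and boundary-participation count for~(b), and to justify that these surrogates faithfully capture the notions of ``discriminative contribution'' and ``contribution in the classification task.'' A secondary subtlety is explaining the opposite directions of monotonicity within a single coherent picture: in the forward pass, specificity to few classes signals a strong, unambiguous feature for those classes, whereas in the backward pass a gradient shared across many classes signals an edge that simultaneously arbitrates many boundaries. I would make this contrast explicit so that the two parts read as complementary rather than contradictory.
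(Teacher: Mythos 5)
Your part~(b) is essentially the paper's own argument: the paper also defines a mutual--discrimination count for an edge with large gradients from \(m\) classes and compares the two edges by monotonicity of that count in \(m\) (the paper's proof uses \(\binom{m}{2}\) rather than your \(\binom{m+1}{2} = \binom{m}{2}+m\); your count, which adds the class-vs-rest boundaries, actually matches the figure caption more closely, but since both are strictly increasing in \(m\) the conclusion is identical). For part~(a), however, you take a genuinely different primary route. The paper argues purely information-theoretically: it assumes the conditional class distribution given the neuron's activation is uniform over the activating classes, computes \(H(\mathcal{C}\mid n)=\log|\mathcal{C}_k|\), and compares information gains \(H(\mathcal{C})-\log|\mathcal{C}_1| < H(\mathcal{C})-\log|\mathcal{C}_2|\). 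Your primary argument instead goes through a linear logit read-out, per-class activation density, and an induced margin, with the entropy/mutual-information statement relegated to an optional reinforcement --- the paper makes that ``optional'' argument the entire proof. Your route buys a more mechanistic story (and makes genuine use of the hypothesis \(|\mathcal{S}_1|=|\mathcal{S}_2|\), which the paper's entropy computation never actually invokes), but it requires committing to a margin surrogate and to the extra assumption that the budget is spread roughly evenly over the activating classes; the paper's route is shorter and needs only the same uniformity assumption, stated as \(p(c\mid n)=1/|\mathcal{C}_k|\). Both arguments operate at the same (heuristic) level of rigor, and the formalization difficulty you flag --- turning ``high activation'' and ``high gradient'' into quantitative surrogates --- is present, and left implicit, in the paper's proof as well.
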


\begin{proof}
\noindent
    \textbf{(a)} Let the conditional entropy \(H(C|n)\) be the remaining uncertainty about the class given the activation of neuron \(n\),
\begin{equation}
    H(\mathcal{C}|n)=-\sum_{c\in \mathcal{C}}p(c|n)\log{p(c|n)}.
\end{equation}
So, for the neurons \(n_{i}^{(l)}\) and \(n_{p}^{(l)}\) we have
\begin{equation}
    \begin{split}
    H(\mathcal{C}|n_{i}^{(l)})=-\sum_{c\in \mathcal{C}_1}p(c|n_{i}^{(l)})\log{p(c|n_{i}^{(l)})} \\ = 
    -\sum_{|\mathcal{C}_1|}\frac{1}{|\mathcal{C}_1|}\log{\frac{1}{|\mathcal{C}_1|}}=\log|\mathcal{C}_1|
    \end{split}
\end{equation}
and
\begin{equation}
    \begin{split}
    H(\mathcal{C}|n_{p}^{(l)})=-\sum_{c\in \mathcal{C}_2}p(c|n_{p}^{(l)})\log{p(c|n_{p}^{(l)})}\\ = 
    -\sum_{|\mathcal{C}_2|}\frac{1}{|\mathcal{C}_2|}\log{\frac{1}{|\mathcal{C}_2|}}=\log|\mathcal{C}_2|.
    \end{split}
\end{equation}
Next, we calculate the amount of information gain \(\xi_{IG}\) after observing activation of the neurons \(n_{i}^{(l)}\) and \(n_{p}^{(l)}\) 
\begin{equation}
\label{eq:sup_12}
    \xi_{IG}(n_{i}^{(l)})=H(\mathcal{C})-H(\mathcal{C}|n_{i}^{(l)})=H(\mathcal{C})-\log|\mathcal{C}_1|,
\end{equation}
\begin{equation}
\label{eq:sup_13}
     \xi_{IG}(n_{p}^{(l)})=H(\mathcal{C})-H(\mathcal{C}|n_{p}^{(l)})=H(\mathcal{C})-\log|\mathcal{C}_2|.
\end{equation}
Since we have assumed in the theorem that \(|\mathcal{C}_1| > |\mathcal{C}_2|\),  from Eq.~(\ref{eq:sup_12}) and Eq.~(\ref{eq:sup_13}) we have
\begin{equation}
     \xi_{IG}(n_{i}^{(l)}) <  \xi_{IG}(n_{p}^{(l)}).
\end{equation}
So, receiving activations on \(e_{ji}^{(l)}\) reveals less information for the model for discriminating between the classes. %, As a result $e_{ji}^{(l)}$ is less significant in classification.

\noindent
\textbf{(b)}
By definition, for a specific model parameter \(e\), if the gradient of a class specific loss w.r.t\text{.} to that parameter \(\frac{\partial{\mathcal{L}^{(c)}}}{\partial{e}}\) is high in magnitude, it shows that small changes in \(e\) have significant impact on distinguishing between the class \(c\) and other classes.
If the parameter exhibits high gradients for \(|\mathcal C_k|\) classes where \(\mathcal C_k \subseteq \mathcal C\), we can calculate the \textit{mutual discrimination} (MD) frequency (\(\nu_{MD}\)) based on the combination rule as \(\binom{|\mathcal C_k|}{2}\). Computing this for edges \(e_{ji}^{(l)}\) and \(e_{qp}^{(l)}\), and simplifying, we get  
\begin{equation}
\label{eq:sup_15}
   \nu_{MD}(e_{ji}^{(l)})=\frac{|\mathcal{C}_1|(|\mathcal{C}_1|-1)}{2},
\end{equation}
and 
\begin{equation}
\label{eq:sup_16}
    \nu_{MD}(e_{qp}^{(l)})=\frac{|\mathcal{C}_2|(|\mathcal{C}_2|-1)}{2}.
\end{equation}
Based on the theorem assumption \(|\mathcal{C}_1| > |\mathcal{C}_2|\), Eq.~(\ref{eq:sup_15}) and Eq.~(\ref{eq:sup_16})  give us 
\begin{equation}
    \nu_{MD}(e_{ji}^{(l)}) > \nu_{MD}(e_{qp}^{(l)}).
\end{equation}
So, as \(e_{ji}^{(l)}\) takes role in more mutual discriminations, it contributes more to the classification task compared to \(e_{qp}^{(l)}\). 
\end{proof} 

The apparently contrastive association of neural connections to classes in forward and backward passes, as identified above, can be explained intuitively as follows. In the forward pass, high activations for a large number of classes mean that the neuron is less helpful in discriminating between those classes. Hence, the associated connections do not contribute much to the classification. Conversely,  high magnitude gradients in the backward pass emulate high sensitivity of the edges to the associated classes. It helps more in narrowing down the classification decisions when the sensitivity remains high for more classes (see Fig. \ref{fig:mutual_discriminations}). 
\begin{SCfigure}[1][t]
    \centering
    \includegraphics[width=0.28\textwidth]{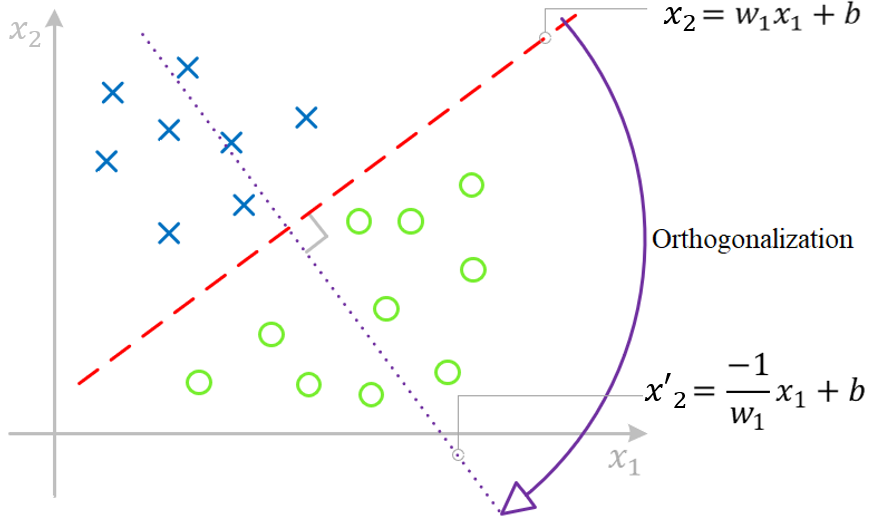} 
    \caption{\textbf{Orthogonalization} of the hyperplane classifying a class w.r.t\text{.} the most significant connection's weight, i.e., \(w_1\) - the only weight. It results in canceling the well-trained model's discriminatory ability.}
    \label{fig:orthogonalization}
\end{SCfigure}
%-----------------------------------------------------------------------------------------
\noindent{\bf Accumulative Class-wise Activations and Gradients:} 
In what follows, we rely on accumulative values of activation and gradient signals for the classes to develop our method. This enables us to sidestep any need to threshold the signals as high or low, which would be required if individual sample signals were considered. We compute class-wise accumulative activation \(\mathcal{A}_{ac.}^c\) for the edge \(e_{ji}^{(l)}\) as
\begin{equation}
\label{eq:acc_act}
    \mathcal{A}_{ac.}^c(e_{ji}^{(l)}) = \sum_{s \in S_c} a_j^{(l)}(s),
\end{equation}

\noindent where \(a_j^{(l)}(s)\) is the activation of \(n_{j}^{(l)}\) of the model receiving sample \(s\) of class \(c\) as the input. Similarly, we define the class-wise accumulative gradients \(\mathcal{G}_{ac.}^{c}\) for edge \(e_{ji} ^{(l)}\) as 
\begin{equation}
\label{eq:acc_grad}
    \mathcal{G}_{ac.}^c(e_{ji}^{(l)}) = \sum_{s \in S_c} \left\|\frac{\partial{\mathcal{L}^{(s)}}}{\partial{e_{ji}^{(l)}}}\right\|,
\end{equation}

\noindent where \(\frac{\partial{\mathcal{L}^{(s)}}}{\partial{e_{ji}^{(l)}}}\) is the gradient of loss function w.r.t\text{.} the edge \(e_{ji}^{(l)}\) of the model receiving sample \(s\) of class \(c\) as the input. More precisely, \(\mathcal{G}_{ac.}^{c}\) is a scalar value here, and we generalized the term \textit{gradient} for brevity. 

\noindent{\bf Class Association Score:} 
Considering Theorem \ref{theorem:both}, we first define a class association (CA) score. As per part (a) of the theorem, there is an inverse relationship between the entropy \(H\) of the accumulated activations of the classes and the contribution of the connection to classification.  Conversely, part (b) of Theorem \ref{theorem:both} suggests a direct relation between the entropy of accumulative gradient signals and their classification contribution. Hence, for a connection \(e_{ji}^{(l)}\), we define CA-score \(\Gamma_{CA}(e_{ji}^{(l)})\)  as follows

\begin{equation}
\label{eq:CA}
    \Gamma_{CA}(e_{ji}^{(l)}) = \frac{H(\bigcup_{c \in \mathcal{C}}\mathcal{G}_{ac.}^\mathcal{C}(e_{ji}^{(l)}))}{H(\bigcup_{c \in \mathcal{C}}\mathcal{A}_{ac.}^\mathcal{C}(e_{ji}^{(l)}))}.
\end{equation}
%-----------------------------------------------------------------------------------------

\noindent{\bf Specific Class Association Score:} 
We are also eventually interested in finding  which connections are more important in classifying certain classes. The CA-score in Eq.~(\ref{eq:CA}) provides a measure to associate a connection to the classes. To define a specific class association (SCA) score for an edge, we use the CA-score to scale the product of the accumulative class-wise activations and gradients with \(\Gamma_{CA}\).
\begin{equation}
\label{eq:SCA}
    \Gamma_{SCA}^c(e_{ji}^{(l)})=\Gamma_{CA}(e_{ji}^{(l)}) ~.~ \mathcal{G}_{ac.}^c(e_{ji}^{(l)}) ~.~ \mathcal{A}_{ac.}^c(e_{ji}^{(l)}),
\end{equation}
where $\Gamma_{SCA}^c(e_{ji}^{(l)})$ is the SCA-score for  $e_{ji}^{(l)}$ of class $c$.

\subsection{Neutralizing a Specific Class in Classifier}
\label{sec:NSCC}

Recall that spurious features are viewed as (sub-)classes in our method.  Hence, we are interested in precisely neutralizing specific classes in a classifier without destroying the classification hyperplanes for the others. To that end, we leverage the SCA-score to select the most important connection that contributes to classifying a specific class as exclusively as possible. We then make the (near) optimal classification hyperplane orthogonal to its initial state w.r.t\text{.} the axis corresponding to that connection. Fig.~\ref{fig:orthogonalization} illustrates the notion of  orthogonalization for the simplest case where there is only one connection, i.e.,~only one weight involved; and orthogonalizing the hyperplane w.r.t.~that  negates the  discriminative ability of the classifier. We leverage this concept in n-dimensional space in Theorem~\ref{theorem:orthogonalizer}. 
\begin{theorem}
\label{theorem:orthogonalizer}
    Let $\displaystyle y_i = \bm{w}_i^{\intercal} \displaystyle \bm{x} + \displaystyle b_i$ define the decision hyperplane for class $\displaystyle i$, and $\displaystyle w_{ji} \in \displaystyle \bm{w}_i$ be the connection weight with significant impact on the classification outcome. The discriminatory effect of  $\displaystyle w_{ji}$ can be nullified by applying the following transformation to it: 
    \begin{equation}
    \label{eq:orthogonolizer}
        f(w_{ji}) = -\frac{\Vert \bm{w}_i \Vert_2^2 - w_{ji}^2 + 1}{w_{ji}}.
    \end{equation}
\end{theorem}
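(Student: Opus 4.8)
The plan is to read the orthogonalization geometrically in an augmented space and then reduce the claim to solving a single linear equation. First I would lift the affine score to its graph: viewing $y_i = \bm{w}_i^{\intercal}\bm{x} + b_i$ as the hyperplane $\{(\bm{x},y_i) : y_i - \bm{w}_i^{\intercal}\bm{x} - b_i = 0\}\subset\mathbb{R}^{d+1}$ (with $d$ the input dimension), whose unnormalized normal vector is $\bm{n}_i = (-\bm{w}_i,\,1)$, where the trailing unit entry corresponds to the output axis $y_i$. Editing only the single weight $w_{ji}\mapsto w_{ji}' = f(w_{ji})$ produces a new graph with normal $\bm{n}_i'$ that coincides with $\bm{n}_i$ in every coordinate except the $j$-th. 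The instruction to ``make the hyperplane orthogonal to its original state about the axis of $w_{ji}$'' (Fig.~\ref{fig:orthogonalization}) then becomes the clean algebraic requirement $\bm{n}_i^{\intercal}\bm{n}_i' = 0$.

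Next I would expand this inner product. Since $\bm{n}_i$ and $\bm{n}_i'$ agree except in coordinate $j$ and both carry the unit entry from the output axis,
\[
\bm{n}_i^{\intercal}\bm{n}_i' = \sum_{k \neq j} w_{ki}^2 + w_{ji}\,w_{ji}' + 1 = 0.
\]
Substituting $\sum_{k\neq j} w_{ki}^2 = \Vert\bm{w}_i\Vert_2^2 - w_{ji}^2$ turns this into a single linear equation in the unknown $w_{ji}'$, and solving gives
\[
w_{ji}' = -\frac{\Vert\bm{w}_i\Vert_2^2 - w_{ji}^2 + 1}{w_{ji}},
\]
which is exactly $f(w_{ji})$ in Eq.~(\ref{eq:orthogonolizer}). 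The division is legitimate because $w_{ji}$ is the connection singled out by the SCA-score as the most significant (hence largest-magnitude, nonzero) weight for class $i$, so $f$ is well defined. As a sanity check, the degenerate single-weight case $d=1$ reduces to $f(w_1) = -1/w_1$, recovering the familiar perpendicular-slope relation $w_1 w_1' = -1$ depicted in Fig.~\ref{fig:orthogonalization}.

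Finally I would close the loop on the \emph{nullification} claim: orthogonality of the two graph-hyperplanes means the edited score now varies along a direction uncorrelated with the original discriminative direction $\bm{w}_i$, so the coherent, trained contribution of this connection to separating class $i$ is annulled rather than merely perturbed. The step I expect to be the main obstacle is pinning down the \emph{correct ambient space} for the orthogonality: taken naively in the input space $\mathbb{R}^d$, orthogonality of the decision boundaries would read $\bm{w}_i^{\intercal}\bm{w}_i' = 0$ and would drop the $+1$; the extra unit term in the numerator is precisely the signature that orthogonality is imposed on the lifted graphs in $\mathbb{R}^{d+1}$, with the output axis contributing a unit component to each normal. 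Getting this lift right is what makes the stated formula---rather than its $+1$-free cousin---fall out exactly, and everything after it is routine algebra.
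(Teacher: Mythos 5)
Your proposal is correct and follows essentially the same route as the paper: both impose orthogonality of the lifted hyperplane normals (with the extra $\pm 1$ component from the output axis) and reduce the claim to the single equation $\sum_{k\neq j} w_{ki}^2 + w_{ji}w_{ji}' + 1 = 0$, which yields Eq.~(\ref{eq:orthogonolizer}); the only cosmetic difference is that you derive the formula by solving this linear equation while the paper verifies it by substitution, and your observation that the normals agree in all coordinates except the $j$-th is exactly the paper's concluding remark that the change occurs only along the $x_j$ axis.
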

\begin{proof}
    Let \(\bm{u} = [w_{1i}, \dots , w_{ni}, -1]\) be the normal vector of \(y_i\) and \(\bm{u}'\) be the normal vector of \(y'_i\) which is the resulting hyperplane by applying Eq.~(\ref{eq:orthogonolizer}) on $w_{ji}$. The dot product of \(\bm{u}\) and \(\bm{u}'\) is
    \begin{equation}
    \label{eq:17}
        \bm{u} \cdot \bm{u}' = \sum_{k=1}^{n} w_{ki}^2 - w_{ji}^2 + 1 + (-\frac{\Vert \bm{w}_i \Vert_2^2 - w_{ji}^2 + 1}{w_{ji}}.w_{ji}) = 0 \quad 
    \end{equation}
    Hence, the resulting hyperplane is orthogonal to its original state w.r.t.~$w_{ij}$. For a well-trained classifier, this implies that the classifier can no longer discriminate between the samples along the $w_{ij}$ dimension after the transformation. 
    Next, we calculate the difference caused by the transformation between the normal vectors of the two hyperplanes:
    \begin{equation}
    \label{eq:18}
        \bm{u} - \bm{u}' = \bm{v}_j(-\frac{\Vert \bm{w}_i \Vert_2^2 - w_{ji}^2 + 1}{w_{ji}} - w_{ji}),
    \end{equation}
    where \(\bm{v}_j\) is the one-hot vector associated with \(x_j\) axis. From Eq.~(\ref{eq:17}) and Eq.~(\ref{eq:18}) we conclude that the transformation in Eq.~(\ref{eq:orthogonolizer}) transforms the hyperplane to be orthogonal to its initial state merely with changes occurring along \(x_j\) axis.
\end{proof} 

\subsection{Removing Fictitious Classes}
\label{sub:method_remove_fictitious}
By definition, spurious features are (largely) unrelated to the causal features, which makes them a relatively high abstraction level  counterpart of interpretable causal features.   
We exploit this intrinsic high-level nature of spurious features to intuitively treat them as (sub-)classes within the original classes. 
In the form of Algorithm \ref{alg:1}, we have a tool to neutralize specific classes in a classifier. We adapt this tool further in Algorithm \ref{alg:2} to neutralize the impact of spurious correlations learned by a model by removing fictitious classes of spurious features. 

In Algorithm~\ref{alg:2}, to remove a fictitious class from the real classes of a model $\mathcal{M}$,  we create model $\mathcal{M'}$ that is a copy of $\mathcal{M}$. We alter the penultimate layer of the copy such that it has the spurious feature as one of its predicted classes. $\mathcal{M'}$ remains frozen, except for the weights of its penultimate layer. We fine-tune those weights   on the dataset that contains the spurious feature and its corresponding label. 
The purpose of this fine-tuning is to bottleneck  the spurious correlation learned by $\mathcal{M}$ to an identifiable connection in  $\mathcal{M'}$ - our experiments in Sec.~\ref{sec:experiments} show this can be achieved  efficiently.  Now that  $\mathcal{M'}$ recognizes the spurious feature as a class, we can remove it using the method in  Sec.~\ref{sec:NSCC}.

Since $\mathcal{M'}$ is the same as  $\mathcal{M}$ until the last layer, the high-level features extracted by both models are the same. We can apply the same class removal on $\mathcal{M}$ not to remove the complete class, but to remove the impact of the corresponding high-level features. This process can also be interpreted as removing a sub-class from  superclasses or removing a fictitious class from the real classes. Either way, it is notable that the method remains post-hoc because it does not require model retraining. The penultimate layer fine-tuning of $\mathcal{M'}$ is for weight identification purpose. The weight gets edited using Eq.~(\ref{eq:orthogonolizer}). It is emphasized that we intentionally present Algorithm~\ref{alg:2} such that  a copy of $\mathcal M$ gets created. This is to  clearly explain the underlying idea. Directly removing and replacing $\mathcal{M}$'s penultimate layer is a more memory-efficient alternative to implement the same concept.  

\label{sec:experiments}

\begin{algorithm}[t]
\caption{Class removal}
\label{alg:1}
\begin{algorithmic}[1]
    \REQUIRE Class index to remove (\(c_r\)), Model \(\mathcal{M}\)
    \FOR{edges \(e\) in the last layer \(L\) of \(\mathcal{M}\) } 
        \STATE Compute accumulative activations \(\mathcal{A}_{ac.}^c(e^{(L)})\)  using Eq.(\ref{eq:acc_act}) for \(\forall c \in \mathcal{C}\).
        \STATE Compute accumulative gradients magnitudes \(\mathcal{G}_{ac.}^c(e^{(L)})\) using Eq.~(\ref{eq:acc_grad}) for \(\forall c \in \mathcal{C}\).
        \STATE Compute CA score \(\Gamma_{CA}(e^{(L)})\) using Eq.~(\ref{eq:CA}). 
        \STATE Compute SCA score \(\Gamma_{SCA}^{c_r}(e^{(L)})\) using Eq.~(\ref{eq:SCA}).
    \ENDFOR

\STATE Select the most significant Connection \(e^*\) s.t. \\ \(e^* = \arg\max_{e} \Gamma_{SCA}^{c_r}(e^{(L)})\).  
\STATE Orthogonalize the hyperplane associated with \(c_r\) w.r.t\text{.} \(e^*\) following Eq.~(\ref{eq:orthogonolizer}).
\end{algorithmic}
\end{algorithm}
In general, removing a sub-class from the main classes is more challenging than directly removing a main class, as sub-class features might have much more inter-(sub-)class overlap. This is the reason that we continually sought a connection that is not only significant in classifying a certain fictitious class, but also does it as exclusively as possible. Nonetheless, even if we are able to find such a connection, it remains possible that the same connection also contributes to classifying other sub-classes  to some extent. 

To handle that, we define Partial Feature Neutralization (PFN). Conceptually, PFN  enables the hyperplane classifying a class to make a controllable, arbitrary tilt w.r.t.~its initial state. Eq.~(\ref{eq:partial}) shows the  weight update required to comply with PFN for feature neutralization to an arbitrary extent. 

\begin{equation}
    \label{eq:partial}
    f(w_{ji}) = r.(-\frac{\Vert \bm{w}_i \Vert_2^2 - w_{ji}^2 + 1}{w_{ji}}) + (1-r).w_{ji},
\end{equation}

where \(r \in [0, 1]\) is the neutralization rate. Using PFN,  we can controllably neutralize features to a level that eliminates model's over-reliance on them, without significantly altering  model's overall performance. This is established quantitatively in our experiments in \ref{sec:remove_spurious}. Optimal $r$ can be simply found using binary search, where we consider an $r$ as \textit{excessive} if it leads to a high reduction in average performance (compared to the improvement for the worst group) or if it changes the worst group, i.e.,~the accuracy of the initial worst group overtakes the average or another group's accuracy. 

\begin{algorithm}[t]
\caption{Fictitious class removal}
\label{alg:2}
\begin{algorithmic}[1]
    \REQUIRE Model $\mathcal{M}$,  fictitious class label $\Tilde{c}$
    \STATE Initialize $\mathcal{M}' \gets \text{Copy}(\mathcal{M})$ 
    \STATE Redefine the last FC layer of $\mathcal{M'}$ to have $\Tilde{c}$ as one of its classes
    \STATE Freeze all parameters of $\mathcal{M'}$ except the last FC layer weights
    \STATE Fine-tune $\mathcal{M'}$ to learn classifying $\Tilde{c}$
    \STATE Execute Steps 1-7 of Algorithm \ref{alg:1} on $\mathcal{M'}$
    \STATE Apply Step 8 of Algorithm \ref{alg:1} on $\mathcal{M}$
\end{algorithmic}
\end{algorithm}
\vspace{-2mm}
\section{Experiments}
\vspace{-1mm}
We first evaluate our unlearning method and verify its ability in effective and precise class removal. This is followed by its application in neutralizing spurious correlations. We start by testing the SCA score in Sec.~\ref{sec:class_removal}. In Sec.~\ref{sec:remove_fictitious}, we perform experiments on the more challenging problem of removing subclasses from classes. Finally, our proposed method, denoted as Fictitious Class Removal (FCR), is applied to neutralize spurious correlations in Sec.~\ref{sec:remove_spurious}. Analysis of CA score effectiveness and ablation study is provided in Sec.~\ref{subsec:ablation}.

For our experiments on (sub-)class removal, we define a simple two-layer convolutional model Conv-2Net, and also use ResNet-18 (\ref{sec:class_removal} and \ref{sec:remove_fictitious}). Aligned with the literature, ResNet-50 is used for extensive experiments on spurious correlation neutralization (\ref{sec:remove_spurious}). 

\vspace{-2mm}
\subsection{Class Removal}
\label{sec:class_removal}
We use the SCA-score in Eq.~(\ref{eq:SCA})  to rank weights according to their significance in class-specific classification and apply the transformation in Eq.~(\ref{eq:orthogonolizer}) to unlearn each class separately, without compromising the accuracy of the others. 

 Fig. \ref{fig:class_removal_LeNet} provides the accuracy curves for editing up to three weights from the model for removing CIFAR-10 classes individually.

We can see that our class removal method decreases the target class accuracy to almost zero with single weight editing.

Moreover, removing the class does not negatively impact the accuracies for other classes. For instance, after removing  class 0, the accuracy for classes 1, 3, 5, 6, and 7 remained intact, while there is a slight increase in the accuracy for classes 2, 4, 8, and 9. 

We also apply our class removal technique to remove multiple classes. Fig. \ref{fig:multiple_class_removal} summarizes the result for removing up to 8 random classes for CIFAR-10 and 98 classes for CIFAR-100 by editing just one weight per class. The average accuracy for the removed classes clearly drops to around random-guessing while there is no damage to the un-removed classes. These results clearly establish our approach as an effective class removal strategy. 

\vspace{-2mm}
\subsection{Fictitious Class Removal}
\label{sec:remove_fictitious}
\begin{figure*}[t]
    \centering
    \includegraphics[width=\textwidth] {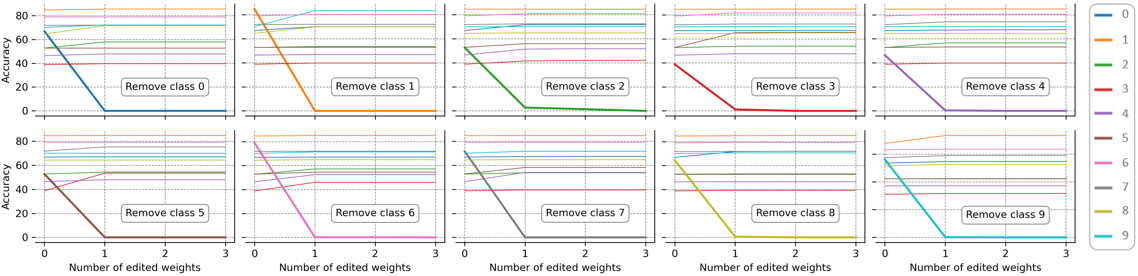}
    \caption{Removal of individual classes of CIFAR-10. In every case, the accuracy for the removed class decreases to almost zero by editing a single weight without negatively impacting accuracies for the other classes. Class labels are provided in the legend.}
    \label{fig:class_removal_LeNet}
\end{figure*}
\begin{figure}[h]
    \centering
    \includegraphics[width=0.47\textwidth]{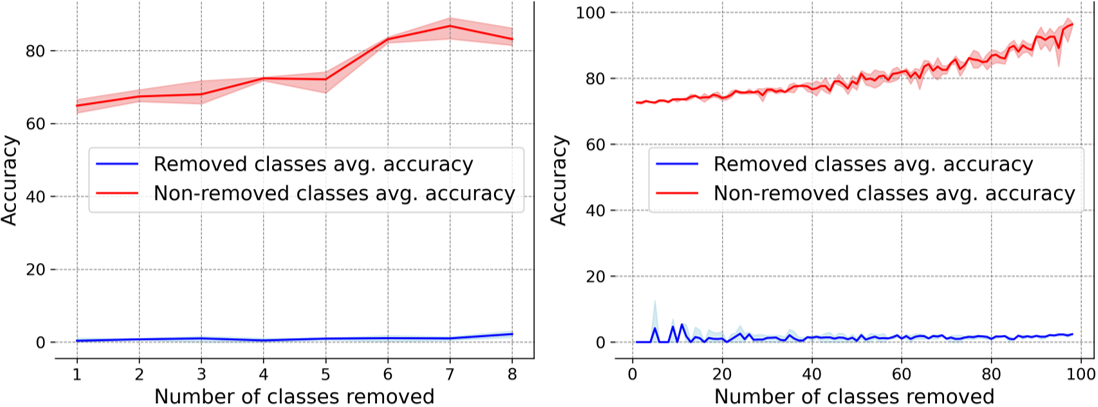} 
    \caption{Average accuracy change after removing random classes. Left: CIFAR-10 results using a well-trained Conv-2Net. 
    Right: CIFAR-100 results using ResNet-18. The accuracy for non-removed  classes improves as more classes are removed.}
    \label{fig:multiple_class_removal}
    \vspace{-2mm}
\end{figure}

Following the order of our discussion in Sec.~\ref{methodology}, here we examine the notion of removing fictitious classes from real classes. To this end,  we use ParityMNIST~\cite {mahinpei2021promises} dataset consisting of two classes; namely, `Class 0' and `Class 1' that respectively contain \textit{even} and \textit{odd} digits from 0 to 9. We assume that classes 0 and 1 are the true classes, i.e., superclasses,  while the digits within them are the fictitious classes, i.e., subclasses. Following our Algorithm \ref{alg:2}, a model $\mathcal{M}$ on ParityMNIST is trained. Then we tune a copy of $\mathcal{M}$ on MNIST. Since the digits are fictitious classes here, we can treat this model as $\mathcal{M'}$. We use the information of activations and gradients of the 10 digit classes to compute the SCA-Score for each class. To remove a fictitious class,  we perform the class removal as proposed in Sec.\ref{sec:class_removal}, i.e., using the SCA-score obtained from $\mathcal{M'}$ for the model $\mathcal{M}$ trained on  ParityMNIST.  

The collective results of removing all 10 fictitious classes are given in Fig. \ref{fig:remove_subclasses} (right) with different neutralization rates starting from 0 to 100 percent. It can be seen that almost all subclasses are removed without significant degradation in the accuracies for the other classes. To emphasize on further merits of our SCA metric, especially the components derived from Theorem \ref{theorem:both}, we also performed the same experiment directly using accumulative gradient magnitudes and accumulative activations instead of the SCA score. Results of that experiment are given in Fig. \ref{fig:remove_subclasses} (left). From the plots, it is clear that our proposed metric for selecting the most significant connections is more effective as it achieves lower average accuracy for the removed classes while maintaining slightly better average accuracy for the non-removed classes. Hence, our metric is utilized for the problem of spurious feature removal as well.  

\vspace{-2mm}
\subsection{Removing Spurious Features}
\label{sec:remove_spurious}

Finally, we provide results for removing spurious features. We treat this as removing fictitious classes from real classes. However, here we consider spurious high-level features  as the fictitious classes inside the true classes. We present results on the well-known datasets, commonly used in benchmarking spurious correlation mitigation methods, i.e., Waterbirds~\cite{sagawa2019distributionally}, CelebA~\cite{liu2015deep}, and Metashift~\cite{liang2022metashift}. 

\noindent{\bf Datasets:} A brief description of the dataset is given below.\\
\noindent\textit{Waterbirds} \cite{sagawa2019distributionally}: For this standard dataset, a  bird type classifier has to recognize waterbirds from landbirds. The dataset consists of four groups: waterbirds on water background, waterbirds on land background, landbirds on water background, and landbirds on land background. The majority groups are waterbirds on water background and landbirds on land background which comprise most of the dataset, while the other two groups where backgrounds and bird types do not match, are minority groups and consist a small proportion of the dataset. As a result of this imbalance, background is spuriously correlated to the bird types.

\noindent
\textit{CelebA} \cite{liu2015deep}: In CelebA, the gender feature has a spurious correlation to hair-color. The minority group in the dataset consists of samples of blond male individuals. So, the hair-color classifier tends to misclassify the hair-color of male and blond samples more than other groups.  

\noindent
\textit{MetaShift} \cite{liang2022metashift}: We employ  Cat vs.~Dog test from MetaShift dataset, where the Dog class is trained on bench and bike in the background, while the Cat class contains sofa and bed, and both classes are tested on samples having shelf in the background. In this standard setting, the challenge comes in the difference between  the test and  train data distributions.

\begin{figure}[t]
    \centering
    \includegraphics[width=0.47\textwidth]{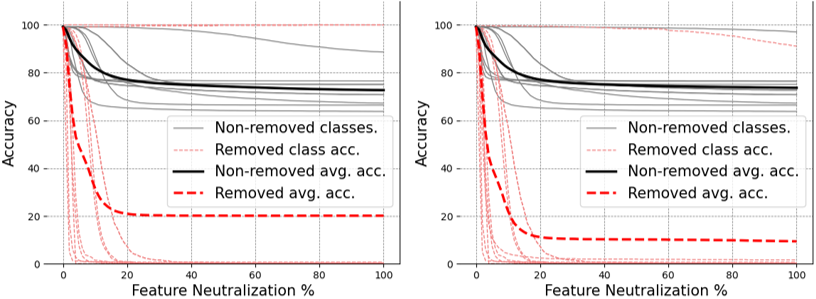} 
    \caption{Accuracy of removed and retained subclasses on ParityMNIST with different neutralization rate $r$. Left: Only accumulative gradients and activations are used for selecting the most significant connections. Right: SCA-score is used for selecting the connections.} 
    \label{fig:remove_subclasses}
    \vspace{-2mm}
\end{figure}

\begin{figure*}[t]
    \centering
    \includegraphics[width=0.85\textwidth]{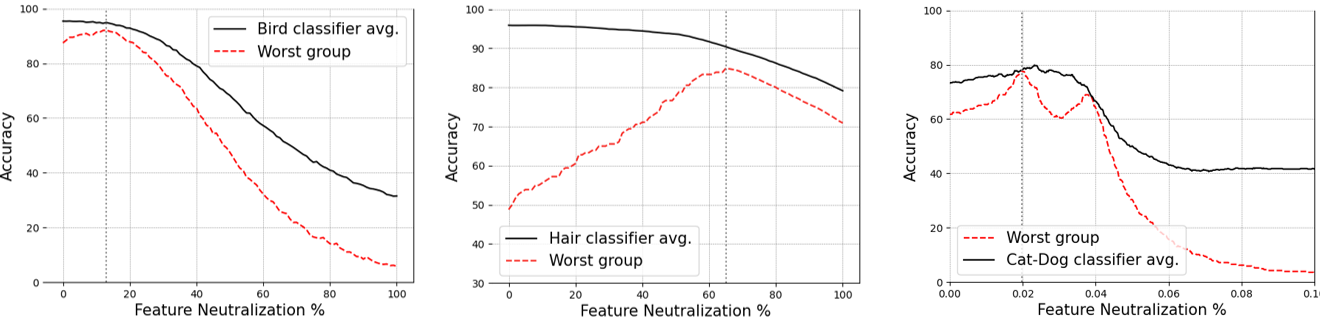} 
    \caption{Feature neutralization by different neutralization rate values ($r$). The best $r$ can
be found using binary search, where $r$ is considered $excessive$ if it reduces the average performance immensely or if it leads to change of the worst group, and is shown with a vertical dotted lines for Waterbirds (left), CelebA (middle), and MetaShift (right).}
    \label{fig:remove_spurious_curves}
\end{figure*}

\begin{figure}
    \centering
    \includegraphics[width=1\linewidth]{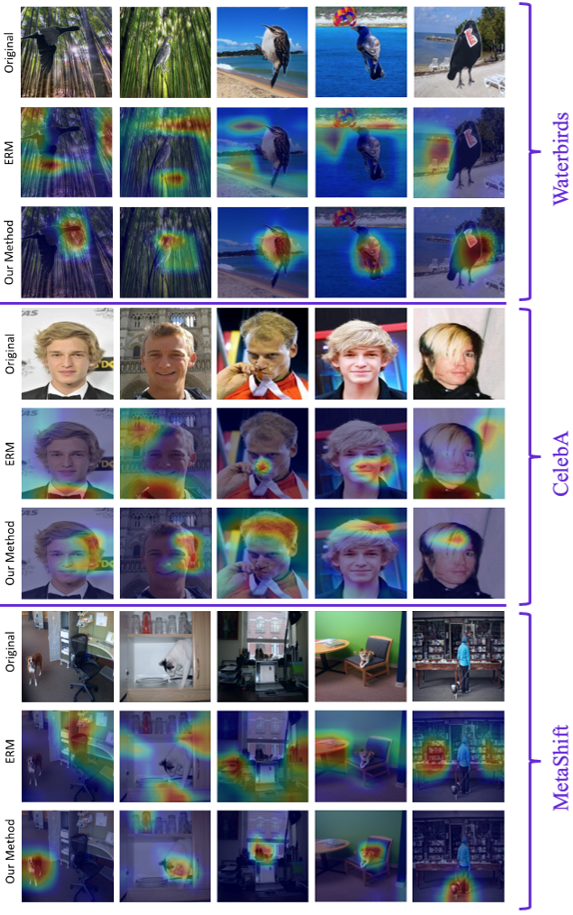}
    \caption{GradCAM saliency maps for representative  samples that are misclassified by ERM model and correctly classified by our method. The visualizations confirm that our method effectively remove the model's attention from non-robust features.}
    \label{fig:combined}
\end{figure}

\noindent{\bf Results:} We compare our method with the state-of-the-art methods of this direction, namely Group DRO \cite{sagawa2019distributionally}, PDE \cite{deng2024robust}, JTT \cite{liu2021just}, DFR \cite{kirichenko2022last}, LBC \cite{zheng2024learning}, DaC \cite{noohdani2024decompose}, MaskTune \cite{asgari2022masktune}, LC \cite{liu2022avoiding}, DISC \cite{wu2023discover}, and DFR+ExMAP \cite{chakraborty2024exmap}. The comparison results are given in Table~\ref{tab:SOTA}. Our method achieves the highest worst-group result for the Waterbirds and MetaShift datasets among all categories, 
whether or not they use Group Annotations, and on CelebA, it has the lowest gap between average and worst-group accuracies in the last category. As indicated, ours is the only method that can be applied post-hoc and it manipulates only one weight per class. Hence, it can be applied to already well-optimized models without requiring updates for excessive number of weights. 

Fig. \ref{fig:remove_spurious_curves} comprehensively presents the worst group and average accuracies using different neutralization rates $r$ for all the three datasets. We can choose optimal $r$ value using binary search as discussed in Sec. \ref{sub:method_remove_fictitious}. As observed in \ref{fig:remove_spurious_curves}, the optimal value for Waterbirds is $r=13$ (left) and for CelebA is $r=65$ (middle). The best $r$ for MetaShift is 0.0199 (right), on which the gap between the worst group and average accuracies is as low as $0.1\%$. The decline in worst group accuracy is observed after these points because further neutralization changes the worst group to another group since it still relies more on the removed feature.

To confirm that our technique is able to take away model's attention from spurious features and re-focus it on the causal features, we also observe the saliency maps of the models before and after our modifications.

In Fig.~\ref{fig:combined}, we  present representative qualitative results for Waterbirds, CelebA and MetaShift datasets. 
From the figure, it is clear that our method is able to fix the spurious correlation based associations in ERM based models very effectively. 
Since our method eventually disrupts the weight in the penultimate layer, it remains largely agnostic to backbone architectures. 
Our implementation will be made public after publication.

\begin{table*}[t]
\caption{Performance comparison of our FCR technique with the state-of-the-art methods, presenting worst-group (Worst) and gap (Gap)  between Worst and average accuracies on Waterbirds, CelebA and MetaShift datasets. The best results are shown in \textbf{bold}. Ours is a unique post-hoc method, which uses a single weight to remove spurious correlation for a given feature.} 

\label{tab:SOTA}
\centering
\begin{adjustbox}{valign=c, scale=0.85}
\begin{tblr}{
  column{3} = {c},
  column{4} = {c},
  column{5} = {c},
  column{6} = {c},
  column{7} = {c},
  column{8} = {c},
  column{9} = {c},
  column{10} = {c},
  cell{1}{1} = {r=2}{},
  cell{1}{2} = {r=2}{},
  cell{1}{3} = {c=2}{},
  cell{1}{5} = {r=2}{},
  cell{1}{6} = {c=2}{},
  cell{1}{9} = {c=2}{},
  cell{1}{12} = {c=2}{c},
  cell{3}{11} = {c},
  cell{3}{12} = {c},
  cell{3}{13} = {c},
  cell{4}{2} = {c},
  cell{4}{11} = {c},
  cell{4}{12} = {c},
  cell{4}{13} = {c},
  cell{5}{2} = {c},
  cell{5}{11} = {c},
  cell{5}{12} = {c},
  cell{5}{13} = {c},
  cell{6}{2} = {c},
  cell{6}{11} = {c},
  cell{6}{12} = {c},
  cell{6}{13} = {c},
  cell{7}{2} = {c},
  cell{7}{11} = {c},
  cell{7}{12} = {c},
  cell{7}{13} = {c},
  cell{8}{2} = {c},
  cell{8}{11} = {c},
  cell{8}{12} = {c},
  cell{8}{13} = {c},
  cell{9}{2} = {c},
  cell{9}{11} = {c},
  cell{9}{12} = {c},
  cell{9}{13} = {c},
  cell{10}{2} = {c},
  cell{10}{11} = {c},
  cell{10}{12} = {c},
  cell{10}{13} = {c},
  cell{11}{2} = {c},
  cell{11}{11} = {c},
  cell{11}{12} = {c},
  cell{11}{13} = {c},
  cell{12}{11} = {c},
  cell{12}{12} = {c},
  cell{12}{13} = {c},
  cell{13}{2} = {c},
  cell{13}{11} = {c},
  cell{13}{12} = {c},
  cell{13}{13} = {c},
  cell{14}{2} = {c},
  cell{14}{11} = {c},
  cell{14}{12} = {c},
  cell{14}{13} = {c},
  hline{1,15} = {-}{0.08em},
  hline{2} = {3-4,6-7,9-10,12-13}{0.03em},
  hline{3,5,9} = {-}{0.05em},
}
Method &  & Group Annotations\footnotemark[1] &  & Post-hoc & Waterbirds &  &  & CelebA &  &  & MetaShift (Cat vs. Dog) & \\
 &  & Train & Val. &  & Worst ($\%$) $\uparrow$ & Gap ($\%$) $\downarrow$ &  & Worst ($\%$) $\uparrow$ & Gap ($\%$) $\downarrow$ &  & Worst ($\%$) $\uparrow$ & Gap ($\%$) $\downarrow$\\
Group DRO (ICLR '20) \cite{sagawa2019distributionally} &  & Yes & Yes & \cross & 91.4 \scriptsize $\pm$1.1 & 2.1 &  & 88.9 \scriptsize $\pm$2.3 & 4.0 &  & 66.0 \scriptsize $\pm$3.8 & 7.6\\
PDE (ICML '23) \cite{deng2024robust} &  & Yes & Yes & \cross & ~90.3\scriptsize $\pm$0.3 & 2.1 &  & \textbf{~ 91.0\scriptsize $\pm$0.4} & 1.0 &  & - & -\\
JTT (ICML '21) \cite{liu2021just} &  & No & Yes & \cross & 86.7\scriptsize$\pm$ $N/A$ & 6.6 &  & 88.0\scriptsize $\pm$ $N/A$ & 6.9 &  & 64.6 \scriptsize $\pm$2.3 & 9.8\\
DFR (ICLR '23) \cite{kirichenko2022last} &  & No & Yes & \cross & 92.9\scriptsize $\pm$0.2 & 1.3 &  & 88.3\scriptsize $\pm$1.1 & 3.0 &  & 72.8 \scriptsize $\pm$3.8 & 4.7\\
LBC (IJCAI '24) \cite{zheng2024learning} &  & No & Yes & \cross & 88.1\scriptsize $\pm$1.4 & 6.0 &  & 87.4\scriptsize $\pm$1.8 & 5.0 &  & - & -\\
DaC (CVPR '24) \cite{noohdani2024decompose} &  & No & Yes & \cross & 92.3\scriptsize $\pm$ 0.4 & 3.0 &  & 81.9 \scriptsize $\pm$ 0.7 & 9.5 &  & \textbf{78.3 \scriptsize $\pm$ 1.6} & 1.0\\
Base (ERM) &  & No & No & \cross & 75.3\scriptsize $\pm$ 0.6 & 24.4 &  & 48.8 \scriptsize $\pm$ 1.1 & 47.1 &  & 62.1 \scriptsize $\pm$ 4.8 & 10.8\\
MaskTune (NeurIPS '22) \cite{asgari2022masktune} &  & No & No & \cross & 86.4\scriptsize $\pm$ 1.9 & 6.6 &  & 78.0\scriptsize $\pm$ 1.2 & 13.3 &  & 66.3 \scriptsize $\pm$ 6.3 & 6.8\\
LC (ICLR '23) \cite{liu2022avoiding}&  & No & No & \cross & 90.5\scriptsize $\pm$ 1.1 & $N/A$ &  & 88.1\scriptsize $\pm$ 0.8 & $N/A$ &  & - & -\\
DISC (ICML '23) \cite{wu2023discover}&  & No & No & \cross & 88.7\scriptsize $\pm$ 0.4 & 5.1 &  & - & - &  & 73.5\scriptsize $\pm$ 1.4 & 2.0\\
DFR+ExMap (CVPR '24) \cite{chakraborty2024exmap} &  & No & No & \cross & 92.5\scriptsize $\pm$ $N/A$ & 3.5 &  & 84.4\scriptsize $\pm$ $N/A$ & 7.4 &  & - & -\\
FCR - Ours &  & No & No & \tick & \textbf{93.2 \scriptsize $\pm$ 0.3} & 1.9 &  & 84.9\scriptsize $\pm$ 0.7 & 5.4 &  & \textbf{78.3 \scriptsize $\pm$ 0.4} & 0.1
\end{tblr}
\end{adjustbox}
\end{table*}

\begin{table}
\caption{Details of fine-tuning $\mathcal{M}'$ for different dataset.}
\begin{adjustbox}{valign=c, scale=0.8}
\centering
\begin{tblr}{
  column{even} = {c},
  column{3} = {c},
  hline{1,10} = {-}{0.08em},
  hline{2} = {-}{0.05em},
}
                                 & Waterbirds                                                                      & CelebA                                                                          & MetaShift                                                                       \\
\# of epochs                 & 5                                                                               & 5                                                                               & 100                                                                             \\
\# of params updated     & 4048                                                                            & 4048                                                                            & 4048                                                                            \\
Percentage of params updated & $ \sim 1.7 \times 10^{-2} \%$ & $\sim 1.7 \times 10^{-2} \%$ & $ \sim 1.7 \times 10^{-2}\%$ \\
Number of training samples       & 16,788                                                                          & 202,599                                                                         & 749                                                                             \\
Learning rate                    & $5 \times 10^{-6}$                                        & $10^{-3}$                                                                & $10^{-3}$                                                                \\
Optimizer                        & Adam                                                                            & Adam                                                                            & Adam                                                                            \\
Batch size                       & 32                                                                              & 32                                                                              & 16                                                                              \\
Accuracy on new task                        & $92.85\%$                                                    & $84.40\%$                                                    & $84.88\%$                                                    
\end{tblr}
\end{adjustbox}
\label{tab:finetune}
\end{table}
\vspace{-3mm}

\section{Further Discussions}
\label{sec:further}
\subsection{Experimental Setting}
In Sec.~\ref{sub:method_remove_fictitious}, we talked about the copy \(\mathcal M'\) of the original model \(\mathcal M\). In Algorithm~\ref{alg:2}, we also noted `fine-tuning' of \(\mathcal M'\). Here, we give further details about the fine tuning, as summarized in Table~\ref{tab:finetune}. The table summarizes the hyper-parameter settings used for fine-tuning model \(\mathcal{M}'\) for the three datasets used. The training data used for fine-tuning for CelebA is produced simply by using gender targets. For Waterbirds we used the `Water' and `Land' background images from Places365 \cite{zhou2017places} and used the foreground segmentation masks provided by \cite{sagawa2019distributionally}. For MetaShift, we utilized the segmentation technique proposed in \cite{chen2017rethinking} for generating foregrounds, and then filled the remaining image using \cite{telea2004image}. A few representative samples of the produced foreground-background dataset are shown in Fig.~\ref{fig:fore-back}.
\footnotetext[1]{By ``Group Annotations'', we mean the annotations of the original training data that include combinations of classes and features. Our method does not require these annotations and even the original training data at all. We only need to know what the spurious feature is, and we can use any data that can be used to fine-tune \(\mathcal{M}'\) to recognize the spurious feature as one of its targets.}

\begin{figure}[h]
    \centering
    \includegraphics[width=.4\textwidth]{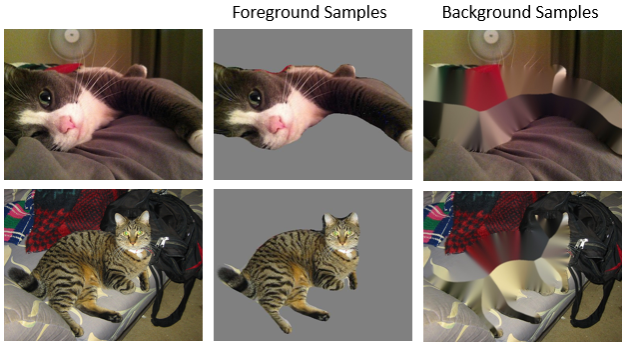} 
    \caption{Representative samples of the prepared foreground-background dataset to fine-tune $\mathcal{M}'$ for MetaShift.} 
    \label{fig:fore-back}
\end{figure}

\subsection{Ablation Study on CA-Score}
\label{subsec:ablation}
In Sec.~4.1, we introduced CA-score. This score is compositional in its nature, comprising components related to activations and gradients. 
We test the importance of the weights selected by the individual components of the CA-score and the eventual CA-score proposed in the paper. 

\begin{SCfigure}[1][h]
    \centering
    \includegraphics[width=0.33\textwidth]{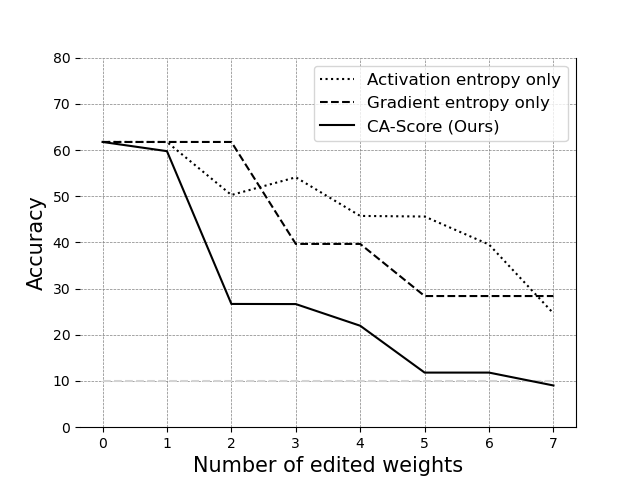} 
    \caption{Ablation for CA-Score. The significance of edges ranked by our composite CA-Score is considerably higher than to any of its constituent terms. The unlearning based on CA-Score is more effective.}
    \label{fig:sup_unlearning_1}
\end{SCfigure}

\begin{figure}[h]
    \centering
    \includegraphics[width=0.5\textwidth]{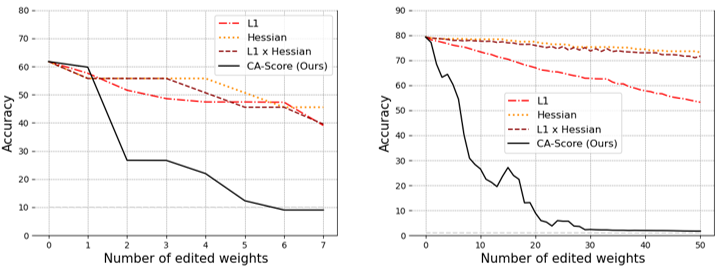} 
    \caption{Unlearning based on CA-Score compared to other common metrics of computing edge importance. (Left) Results on CIFAR-10. (Right) Results on CIFAR-100.}
    \label{fig:sup_unlearning_2}
\end{figure}

 Fig.~\ref{fig:sup_unlearning_1} gives the comparison results of the CA-Score and its constituent components, namely entropy of activations over classes (\(H(\bigcup_{c \in \mathcal{C}}\mathcal{G}_{acc.}^\mathcal{C}(e_{ji}^{(l)}))\)) and the inverse of entropy of gradients over classes (\(\frac{1}{H(\bigcup_{c \in \mathcal{C}}\mathcal{A}_{acc.}^\mathcal{C}(e_{ji}^{(l)}))}\)) on our Conv-2 model trained on CIFAR-10. It is evident that CA-score, as a composite metric, clearly achieves more favorable results in impacting the model performance through the weights specified by its eventual computation. 
  
 We also take a step further and compare the performance of removing weights using our CA-score with popular exiting techniques of using \(L_1\)-norm and Hessian to specify the most significant model weights.
 \begin{figure}[h!]
    \centering
    \includegraphics[width=.4\textwidth]{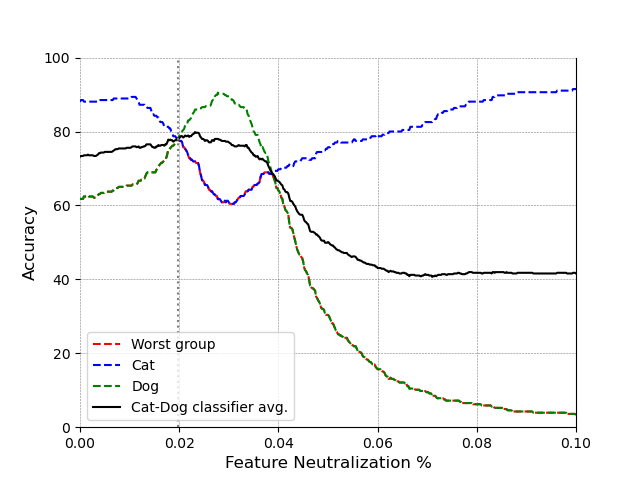} 
    \caption{Changes of the worst group on MetaShift. Excessive neutralization rate initially changes the worst group from \textit{Dog} to \textit{Cat}, and eventually larger values corrupt the model.}
    \label{fig:worst_group_change}
    \vspace{-3mm}
\end{figure}
 In Fig.~\ref{fig:sup_unlearning_2}, results are shown  for Conv-2Net on CIFAR-10 dataset and ResNet-18 on CIFAR-100 dataset. We gradually remove the most important weights suggested by our CA-score and the existing techniques, i.e., using \(L_1\)-norm, Hessian and \(L_1\)-norm times Hessian. It is observable that the CA-score generally perform much better than using conventional methods to specify the most important weights.

\subsection{Details on Subclass Removal on ParityMNIST}
Fig.~8 presented results on ParityMNIST. To further elaborate on these results, we expand on them in Fig.~\ref{fig:remove_from_parity_seperated}. It can be seen that with the exception of sub-class 3, all other subclasses are removed effectively with relatively low compromise in the accuracy for other subclasses.

We conjecture that the reason for low performance (i.e., higher accuracy despite removal) on subclass 3 stems in the fact that this class does not truly have exclusive high-level features as other digits. For example, subclass 3 shares its high-level features extensively with sub-class 8. Hence, the objective of  neutralizing 3 exclusively remains ill-formed. This also highlights an important fact that our general approach of editing the model with limited (or just one) weight may face restrictions if the features of the underlying classes are not exclusive to other classes. 
\begin{figure*}[t]
    \centering
    \includegraphics[width=1\textwidth]{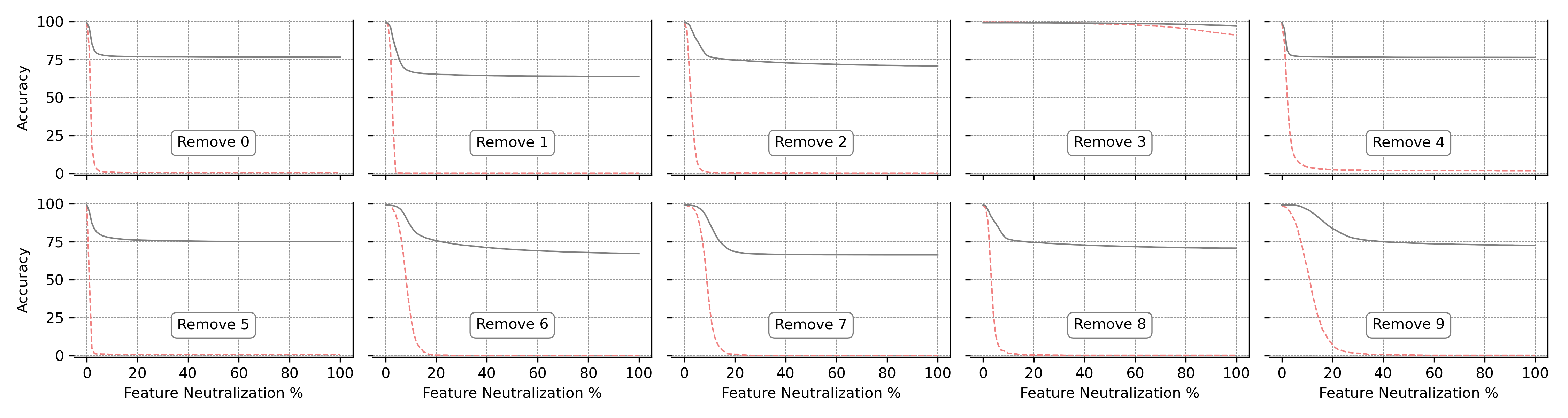} 
    \caption{Performance of subclass removal on parityMNIST. The dashed red and the solid gray curves represent the removed subclass accuracy and the non-removed subclasses accuracy average, respectively. The subclasses are removed effectively without compromising other subclasses,  except for digit `3', which could not be removed completely. We conjecture that this is due to the fact that `3' lacks exclusive high-level features. The set of high-level features of `3' is a subset of the set of the high-level features of `8'.}
    \label{fig:remove_from_parity_seperated}
\end{figure*}

\begin{figure*}[h]
    \centering
    \includegraphics[width=0.99\textwidth]{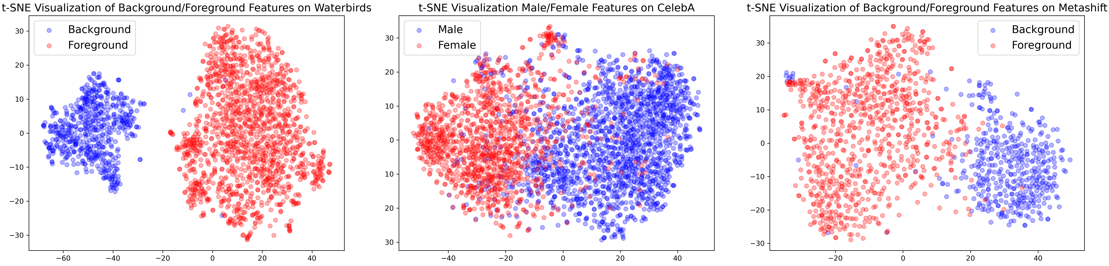} 
    \caption{t-SNE visualization of features for Waterbirds, CelebA, and MetaShift. Waterbirds and MetaShift show more disentagled embeddings compared to CelebA, which result in relatively lower optimal $r$ and higher spurious features neutralization performance on these datasets.}
    \label{fig:t-SNE}
\end{figure*}

\subsection{Changes of the Worst Group}
Extremely high neutralization rates lead to change in the worst group and eventually corruption of the model. Fig.~\ref{fig:worst_group_change} gives the group-wise accuracy for the case of Metashift with different values of \(r\). The average accuracy and the accuracies for each of the classes increase for \(r\) values up to approximately 0.01, which shows the proposed method is beyond merely establishing a tradeoff between the accuracies of different groups.

\subsection{t-SNE Visualizations}
For better understanding of why the optimal neutralization rate (\(r\)) differs among datasets, and why our method is more successful on some datasets compared to others, we utilized t-SNE visualizations (see Fig.~\ref{fig:t-SNE}). The visualizations show more separable spurious features for Waterbirds and MetaShift in comparison to CelebA in which features are more intertwined. This observation is well aligned with higher $r$ needed for our method for CelebA and better results for Waterbirds and MetaShift, as the spurious features can be neutralized more precisely. Additionally, it shows the potential of well-disentangled embeddings to help the proposed method to mitigate spurious features.

\subsection{Independence from Underlying Architectures}
\label{subsec:independence}
In this paper, we conducted experiments on ResNet architecture for compliance with the literature. Nevertheless, we claimed that the proposed spurious correlation is agnostic to the model architecture as it works on the last fully connected layer. Our method tries to neutralize the high-level spurious features that are extracted by the preceding layers. So, it is not affected by different procedures of inference in various architectures. To verify this claim experimentally, we adapted our FCR technique to ViT-B/16 models trained on Waterbirds and CelebA datasets taken from \cite{ghosal2024vision} (see Table~\ref{tab:vit}).

\begin{table}[]
\centering
\caption{Results on ViT-B/16. Worst group and average accuracies are reported for Waterbirds and CelebA. The improvement on worst groups while low degradation in averages verifies the effectiveness of our method on ViT architecture.}
\label{tab:vit}
\resizebox{0.45\textwidth}{!}{%
\begin{tabular}{lcccc} 
\toprule
     & \multicolumn{2}{c}{Waterbirds} & \multicolumn{2}{c}{CelebA} \\ \cmidrule{2-5} 
     & Worst (\%)          & Avg. (\%)           & Worst        & Avg. (\%)          \\ \midrule
ERM  & 89.30 \scriptsize$\pm$1.95        & 96.75 \scriptsize$\pm$0.05        & 94.10 \scriptsize$\pm$0.51        & 97.40 \scriptsize$\pm$0.42          \\
Ours & 93.55 \scriptsize$\pm$ 0.89         & 96.16 \scriptsize$\pm$0.16         & 95.42 \scriptsize$\pm$0.58       & 97.08 \scriptsize$\pm$ 0.79             \\ \bottomrule
\end{tabular}%
}
\end{table}

\section{Conclusion}
\label{sec:conclusion}
This work established an effective connection between the directions of post-hoc ``Machine Unlearning'' and ``Spurious Correlation Mitigation''. We proposed a framework that considers spurious features as fictitious classes inside real classes so that they can be mitigated using machine unlearning. Our unlearning technique modifies only a single weight of the original model for removing any subclass (fictitious class) from its superclass (real class). 
We also accounted for the level of spuriousness of features, and enabled   controllably neutralizing the impact of features that are likely spurious to a fair level. Our method figures out the most significant connections in classifying fictitious classes using a proposed metric relying on activations, gradients and entropy of the neural connections. Our theoretical insights are corroborated with empirical results, which also show  competitive performance for mitigating spurious correlation on three standard datasets\text{.}

\section*{Acknowledgments}
This research is supported by the Australian Government Research Training Scholarship. Dr. Naveed Akhtar is a recipient of the ARC Discovery Early Career Researcher Award (project \#DE230101058), funded by the Australian Government. Professor Ajmal Mian is the recipient of an ARC Future Fellowship Award (project \#FT210100268) funded by the Australian Government.

\bibliographystyle{IEEEtran}
\bibliography{main}

% Generated by IEEEtran.bst, version: 1.14 (2015/08/26)
\begin{thebibliography}{10}
\providecommand{\url}[1]{#1}
\csname url@samestyle\endcsname
\providecommand{\newblock}{\relax}
\providecommand{\bibinfo}[2]{#2}
\providecommand{\BIBentrySTDinterwordspacing}{\spaceskip=0pt\relax}
\providecommand{\BIBentryALTinterwordstretchfactor}{4}
\providecommand{\BIBentryALTinterwordspacing}{\spaceskip=\fontdimen2\font plus
\BIBentryALTinterwordstretchfactor\fontdimen3\font minus \fontdimen4\font\relax}
\providecommand{\BIBforeignlanguage}[2]{{%
\expandafter\ifx\csname l@#1\endcsname\relax
\typeout{** WARNING: IEEEtran.bst: No hyphenation pattern has been}%
\typeout{** loaded for the language `#1'. Using the pattern for}%
\typeout{** the default language instead.}%
\else
\language=\csname l@#1\endcsname
\fi
#2}}
\providecommand{\BIBdecl}{\relax}
\BIBdecl

\bibitem{vapnik1999overview}
V.~N. Vapnik, ``An overview of statistical learning theory,'' \emph{IEEE transactions on neural networks}, vol.~10, no.~5, pp. 988--999, 1999.

\bibitem{geirhos2020shortcut}
R.~Geirhos, J.-H. Jacobsen, C.~Michaelis, R.~Zemel, W.~Brendel, M.~Bethge, and F.~A. Wichmann, ``Shortcut learning in deep neural networks,'' \emph{Nature Machine Intelligence}, vol.~2, no.~11, pp. 665--673, 2020.

\bibitem{liu2022avoiding}
S.~Liu, X.~Zhang, N.~Sekhar, Y.~Wu, P.~Singhal, and C.~Fernandez-Granda, ``Avoiding spurious correlations via logit correction,'' \emph{arXiv preprint arXiv:2212.01433}, 2022.

\bibitem{yang2023mitigating}
Y.~Yang, B.~Nushi, H.~Palangi, and B.~Mirzasoleiman, ``Mitigating spurious correlations in multi-modal models during fine-tuning,'' in \emph{International Conference on Machine Learning}.\hskip 1em plus 0.5em minus 0.4em\relax PMLR, 2023, pp. 39\,365--39\,379.

\bibitem{he2025eva}
Q.~He, K.~Xu, and A.~Yao, ``Eva: Erasing spurious correlations with activations,'' in \emph{The Thirteenth International Conference on Learning Representations}, 2025.

\bibitem{zhang2022correct}
M.~Zhang, N.~S. Sohoni, H.~R. Zhang, C.~Finn, and C.~R{\'e}, ``Correct-n-contrast: A contrastive approach for improving robustness to spurious correlations,'' \emph{arXiv preprint arXiv:2203.01517}, 2022.

\bibitem{ye2024spurious}
W.~Ye, G.~Zheng, X.~Cao, Y.~Ma, and A.~Zhang, ``Spurious correlations in machine learning: A survey,'' \emph{arXiv preprint arXiv:2402.12715}, 2024.

\bibitem{han2024improving}
Y.~Han and D.~Zou, ``Improving group robustness on spurious correlation requires preciser group inference,'' \emph{arXiv preprint arXiv:2404.13815}, 2024.

\bibitem{zheng2025neurontune}
G.~Zheng, W.~Ye, and A.~Zhang, ``Neurontune: Towards self-guided spurious bias mitigation,'' \emph{arXiv preprint arXiv:2505.24048}, 2025.

\bibitem{ludan2023explanation}
J.~M. Ludan, Y.~Meng, T.~Nguyen, S.~Shah, Q.~Lyu, M.~Apidianaki, and C.~Callison-Burch, ``Explanation-based finetuning makes models more robust to spurious cues,'' \emph{arXiv preprint arXiv:2305.04990}, 2023.

\bibitem{nam2022spread}
J.~Nam, J.~Kim, J.~Lee, and J.~Shin, ``Spread spurious attribute: Improving worst-group accuracy with spurious attribute estimation,'' \emph{arXiv preprint arXiv:2204.02070}, 2022.

\bibitem{pagliardini2022agree}
M.~Pagliardini, M.~Jaggi, F.~Fleuret, and S.~P. Karimireddy, ``Agree to disagree: Diversity through disagreement for better transferability,'' \emph{arXiv preprint arXiv:2202.04414}, 2022.

\bibitem{labonte2024towards}
T.~LaBonte, V.~Muthukumar, and A.~Kumar, ``Towards last-layer retraining for group robustness with fewer annotations,'' \emph{Advances in Neural Information Processing Systems}, vol.~36, 2024.

\bibitem{bombari2025spurious}
S.~Bombari and M.~Mondelli, ``Spurious correlations in high dimensional regression: The roles of regularization, simplicity bias and over-parameterization,'' \emph{arXiv preprint arXiv:2502.01347}, 2025.

\bibitem{liu2021just}
E.~Z. Liu, B.~Haghgoo, A.~S. Chen, A.~Raghunathan, P.~W. Koh, S.~Sagawa, P.~Liang, and C.~Finn, ``Just train twice: Improving group robustness without training group information,'' in \emph{International Conference on Machine Learning}.\hskip 1em plus 0.5em minus 0.4em\relax PMLR, 2021, pp. 6781--6792.

\bibitem{asgari2022masktune}
S.~Asgari, A.~Khani, F.~Khani, A.~Gholami, L.~Tran, A.~Mahdavi~Amiri, and G.~Hamarneh, ``Masktune: Mitigating spurious correlations by forcing to explore,'' \emph{Advances in Neural Information Processing Systems}, vol.~35, pp. 23\,284--23\,296, 2022.

\bibitem{deng2024robust}
Y.~Deng, Y.~Yang, B.~Mirzasoleiman, and Q.~Gu, ``Robust learning with progressive data expansion against spurious correlation,'' \emph{Advances in neural information processing systems}, vol.~36, 2024.

\bibitem{wang2024learning}
N.~Wang, L.~Qi, J.~Guo, Y.~Shi, and Y.~Gao, ``Learning generalizable models via disentangling spurious and enhancing potential correlations,'' \emph{IEEE Transactions on Image Processing}, vol.~33, pp. 1627--1642, 2024.

\bibitem{akhtar2024roboss}
M.~Akhtar, M.~Tanveer, and M.~Arshad, ``Roboss: A robust, bounded, sparse, and smooth loss function for supervised learning,'' \emph{IEEE Transactions on Pattern Analysis and Machine Intelligence}, 2024.

\bibitem{kim2024improving}
N.~Kim, J.~Kang, S.~Ahn, J.~Ok, and S.~Kwak, ``Improving robustness to multiple spurious correlations by multi-objective optimization,'' in \emph{International Conference on Machine Learning}.\hskip 1em plus 0.5em minus 0.4em\relax PMLR, 2024, pp. 24\,040--24\,058.

\bibitem{yang2025regulating}
P.~Yang, N.~Akhtar, M.~Shah, and A.~Mian, ``Regulating model reliance on non-robust features by smoothing input marginal density,'' in \emph{European Conference on Computer Vision}.\hskip 1em plus 0.5em minus 0.4em\relax Springer, 2025, pp. 329--347.

\bibitem{kirichenko2022last}
P.~Kirichenko, P.~Izmailov, and A.~G. Wilson, ``Last layer re-training is sufficient for robustness to spurious correlations,'' \emph{arXiv preprint arXiv:2204.02937}, 2022.

\bibitem{eastwood2024spuriosity}
C.~Eastwood, S.~Singh, A.~L. Nicolicioiu, M.~Vlastelica~Pogan{\v{c}}i{\'c}, J.~von K{\"u}gelgen, and B.~Sch{\"o}lkopf, ``Spuriosity didn’t kill the classifier: Using invariant predictions to harness spurious features,'' \emph{Advances in Neural Information Processing Systems}, vol.~36, 2024.

\bibitem{sagawa2019distributionally}
S.~Sagawa, P.~W. Koh, T.~B. Hashimoto, and P.~Liang, ``Distributionally robust neural networks for group shifts: On the importance of regularization for worst-case generalization,'' \emph{arXiv preprint arXiv:1911.08731}, 2019.

\bibitem{hu2018does}
W.~Hu, G.~Niu, I.~Sato, and M.~Sugiyama, ``Does distributionally robust supervised learning give robust classifiers?'' in \emph{International Conference on Machine Learning}.\hskip 1em plus 0.5em minus 0.4em\relax PMLR, 2018, pp. 2029--2037.

\bibitem{oren2019distributionally}
Y.~Oren, S.~Sagawa, T.~B. Hashimoto, and P.~Liang, ``Distributionally robust language modeling,'' \emph{arXiv preprint arXiv:1909.02060}, 2019.

\bibitem{zhang2020coping}
J.~Zhang, A.~Menon, A.~Veit, S.~Bhojanapalli, S.~Kumar, and S.~Sra, ``Coping with label shift via distributionally robust optimisation,'' \emph{arXiv preprint arXiv:2010.12230}, 2020.

\bibitem{lu2025mitigating}
S.~Lu, J.~Chai, and X.~Wang, ``Mitigating spurious correlations in zero-shot multimodal models,'' in \emph{The Thirteenth International Conference on Learning Representations}, 2025.

\bibitem{sohoni2021barack}
N.~S. Sohoni, M.~Sanjabi, N.~Ballas, A.~Grover, S.~Nie, H.~Firooz, and C.~R{\'e}, ``Barack: Partially supervised group robustness with guarantees,'' \emph{arXiv preprint arXiv:2201.00072}, 2021.

\bibitem{creager2021environment}
E.~Creager, J.-H. Jacobsen, and R.~Zemel, ``Environment inference for invariant learning,'' in \emph{International Conference on Machine Learning}.\hskip 1em plus 0.5em minus 0.4em\relax PMLR, 2021, pp. 2189--2200.

\bibitem{zhang2022contrastive}
M.~Zhang, N.~S. Sohoni, H.~R. Zhang, C.~Finn, and C.~R. Correct-N-Contrast, ``A contrastive approach for improving robustness to spurious correlations,'' in \emph{Proceedings of the 39th International Conference on Machine Learning}, vol. 162, 2022, pp. 26\,484--26\,516.

\bibitem{yang2024identifying}
Y.~Yang, E.~Gan, G.~K. Dziugaite, and B.~Mirzasoleiman, ``Identifying spurious biases early in training through the lens of simplicity bias,'' in \emph{International Conference on Artificial Intelligence and Statistics}.\hskip 1em plus 0.5em minus 0.4em\relax PMLR, 2024, pp. 2953--2961.

\bibitem{chakraborty2024exmap}
R.~Chakraborty, A.~Sletten, and M.~C. Kampffmeyer, ``Exmap: Leveraging explainability heatmaps for unsupervised group robustness to spurious correlations,'' in \emph{Proceedings of the IEEE/CVF Conference on Computer Vision and Pattern Recognition}, 2024, pp. 12\,017--12\,026.

\bibitem{sun2025reverse}
S.~Sun, L.~Liu, T.~Liu, S.~Zhi, M.-M. Cheng, J.~Heikkil{\"a}, and Y.~Liu, ``A reverse causal framework to mitigate spurious correlations for debiasing scene graph generation,'' \emph{IEEE Transactions on Pattern Analysis and Machine Intelligence}, 2025.

\bibitem{wang2025backmix}
Y.~Wang, J.~Mu, H.~Huang, Q.~Wang, P.~Zhu, and Q.~Hu, ``Backmix: Regularizing open set recognition by removing underlying fore-background priors,'' \emph{IEEE Transactions on Pattern Analysis and Machine Intelligence}, 2025.

\bibitem{bourtoule2021machine}
L.~Bourtoule, V.~Chandrasekaran, C.~A. Choquette-Choo, H.~Jia, A.~Travers, B.~Zhang, D.~Lie, and N.~Papernot, ``Machine unlearning,'' in \emph{2021 IEEE Symposium on Security and Privacy (SP)}.\hskip 1em plus 0.5em minus 0.4em\relax IEEE, 2021, pp. 141--159.

\bibitem{foster2024fast}
J.~Foster, S.~Schoepf, and A.~Brintrup, ``Fast machine unlearning without retraining through selective synaptic dampening,'' in \emph{Proceedings of the AAAI Conference on Artificial Intelligence}, vol.~38, no.~11, 2024, pp. 12\,043--12\,051.

\bibitem{chundawat2023can}
V.~S. Chundawat, A.~K. Tarun, M.~Mandal, and M.~Kankanhalli, ``Can bad teaching induce forgetting? unlearning in deep networks using an incompetent teacher,'' in \emph{Proceedings of the AAAI Conference on Artificial Intelligence}, vol.~37, no.~6, 2023, pp. 7210--7217.

\bibitem{cao2015towards}
Y.~Cao and J.~Yang, ``Towards making systems forget with machine unlearning,'' in \emph{2015 IEEE symposium on security and privacy}.\hskip 1em plus 0.5em minus 0.4em\relax IEEE, 2015, pp. 463--480.

\bibitem{NEURIPS2020_b8a65506}
Q.~P. Nguyen, B.~K.~H. Low, and P.~Jaillet, ``Variational bayesian unlearning,'' in \emph{Advances in Neural Information Processing Systems}, H.~Larochelle, M.~Ranzato, R.~Hadsell, M.~Balcan, and H.~Lin, Eds., vol.~33.\hskip 1em plus 0.5em minus 0.4em\relax Curran Associates, Inc., 2020, pp. 16\,025--16\,036.

\bibitem{Graves_Nagisetty_Ganesh_2021}
\BIBentryALTinterwordspacing
L.~Graves, V.~Nagisetty, and V.~Ganesh, ``Amnesiac machine learning,'' \emph{Proceedings of the AAAI Conference on Artificial Intelligence}, vol.~35, no.~13, pp. 11\,516--11\,524, May 2021. [Online]. Available: \url{https://ojs.aaai.org/index.php/AAAI/article/view/17371}
\BIBentrySTDinterwordspacing

\bibitem{kurmanji2024towards}
M.~Kurmanji, P.~Triantafillou, J.~Hayes, and E.~Triantafillou, ``Towards unbounded machine unlearning,'' \emph{Advances in neural information processing systems}, vol.~36, 2024.

\bibitem{chen2024fast}
R.~Chen, J.~Yang, H.~Xiong, J.~Bai, T.~Hu, J.~Hao, Y.~Feng, J.~T. Zhou, J.~Wu, and Z.~Liu, ``Fast model debias with machine unlearning,'' \emph{Advances in Neural Information Processing Systems}, vol.~36, 2024.

\bibitem{goel2024corrective}
S.~Goel, A.~Prabhu, P.~Torr, P.~Kumaraguru, and A.~Sanyal, ``Corrective machine unlearning,'' \emph{arXiv preprint arXiv:2402.14015}, 2024.

\bibitem{lecun1989optimal}
Y.~LeCun, J.~Denker, and S.~Solla, ``Optimal brain damage,'' \emph{Advances in neural information processing systems}, vol.~2, 1989.

\bibitem{sunsimple}
M.~Sun, Z.~Liu, A.~Bair, and J.~Z. Kolter, ``A simple and effective pruning approach for large language models,'' in \emph{The Twelfth International Conference on Learning Representations}, 2024.

\bibitem{wu2024auto}
X.~Wu, S.~Gao, Z.~Zhang, Z.~Li, R.~Bao, Y.~Zhang, X.~Wang, and H.~Huang, ``Auto-train-once: Controller network guided automatic network pruning from scratch,'' in \emph{Proceedings of the IEEE/CVF Conference on Computer Vision and Pattern Recognition}, 2024, pp. 16\,163--16\,173.

\bibitem{khakineed}
S.~Khaki and K.~N. Plataniotis, ``The need for speed: Pruning transformers with one recipe,'' in \emph{The Twelfth International Conference on Learning Representations}, 2024.

\bibitem{shi2024towards}
X.~Shi, J.~Ding, Z.~Hao, and Z.~Yu, ``Towards energy efficient spiking neural networks: An unstructured pruning framework,'' in \emph{The Twelfth International Conference on Learning Representations}, 2024.

\bibitem{mahinpei2021promises}
A.~Mahinpei, J.~Clark, I.~Lage, F.~Doshi-Velez, and W.~Pan, ``Promises and pitfalls of black-box concept learning models,'' \emph{arXiv preprint arXiv:2106.13314}, 2021.

\bibitem{liu2015deep}
Z.~Liu, P.~Luo, X.~Wang, and X.~Tang, ``Deep learning face attributes in the wild,'' in \emph{Proceedings of the IEEE international conference on computer vision}, 2015, pp. 3730--3738.

\bibitem{liang2022metashift}
W.~Liang and J.~Zou, ``Metashift: A dataset of datasets for evaluating contextual distribution shifts and training conflicts,'' \emph{arXiv preprint arXiv:2202.06523}, 2022.

\bibitem{zheng2024learning}
G.~Zheng, W.~Ye, and A.~Zhang, ``Learning robust classifiers with self-guided spurious correlation mitigation,'' \emph{arXiv preprint arXiv:2405.03649}, 2024.

\bibitem{noohdani2024decompose}
F.~H. Noohdani, P.~Hosseini, A.~Y. Parast, H.~Y. Araghi, and M.~S. Baghshah, ``Decompose-and-compose: A compositional approach to mitigating spurious correlation,'' in \emph{Proceedings of the IEEE/CVF Conference on Computer Vision and Pattern Recognition}, 2024, pp. 27\,662--27\,671.

\bibitem{wu2023discover}
S.~Wu, M.~Yuksekgonul, L.~Zhang, and J.~Zou, ``Discover and cure: Concept-aware mitigation of spurious correlation,'' in \emph{International Conference on Machine Learning}.\hskip 1em plus 0.5em minus 0.4em\relax PMLR, 2023, pp. 37\,765--37\,786.

\bibitem{zhou2017places}
B.~Zhou, A.~Lapedriza, A.~Khosla, A.~Oliva, and A.~Torralba, ``Places: A 10 million image database for scene recognition,'' \emph{IEEE Transactions on Pattern Analysis and Machine Intelligence}, 2017.

\bibitem{chen2017rethinking}
L.-C. Chen, ``Rethinking atrous convolution for semantic image segmentation,'' \emph{arXiv preprint arXiv:1706.05587}, 2017.

\bibitem{telea2004image}
A.~Telea, ``An image inpainting technique based on the fast marching method,'' \emph{Journal of graphics tools}, vol.~9, no.~1, pp. 23--34, 2004.

\bibitem{ghosal2024vision}
S.~S. Ghosal and Y.~Li, ``Are vision transformers robust to spurious correlations?'' \emph{International Journal of Computer Vision}, vol. 132, no.~3, pp. 689--709, 2024.

\end{thebibliography}

\newpage

\section{Biography Section}

\begin{IEEEbiography}
[{\includegraphics[width=1in,height=1.25in,clip,keepaspectratio]{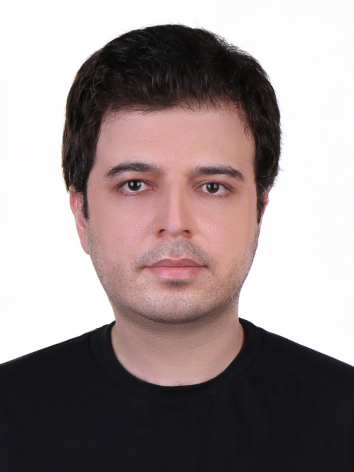}}]{Shahin Hakemi} is currently pursuing a Ph.D. in Computer Science in the Department of Computer Science and Software Engineering at the University of Western Australia. He was awarded the Australian Government Research Training Program (RTP) Scholarship in 2024. His research interests include explainable deep learning and computer vision.
\end{IEEEbiography}
\begin{IEEEbiography}[{\includegraphics[width=1in,height=1.25in,clip,keepaspectratio]{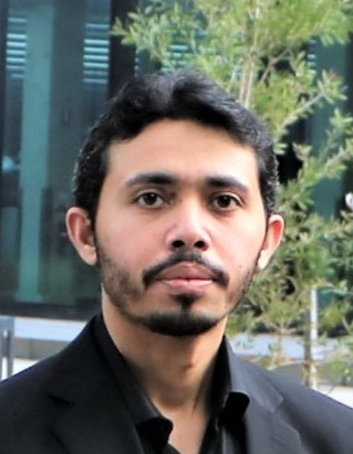}}]{Naveed Akhtar}
(Member, IEEE) received the master’s degree from Hochschule Bonn-Rhein-Sieg, North Rhine-Westphalia, Germany, and the Ph.D. degree in computer science from the University of Western Australia, Crawley, WA, Australia. He is currently a Senior Research Fellow with the University of Melbourne, Parkville, VIC, Australia. From 2021 to 2024, he was an ACM Distinguished Speaker. He was the recipient of the Discovery Early Career Researcher Award from the Australian Research Council, Universal Scientific Education and Research Network Laureate in Formal Sciences, and the Google Research Scholar Program Award in 2023. He was also a finalist for Western Australia's Early Career Scientist of the Year in 2021. He is an Associate Editor for IEEE Transactions on Neural Networks and Learning Systems, and has served as an Area Chair for prestigious conferences such as the IEEE Conference on Computer Vision and Pattern Recognition (CVPR) and the European Conference on Computer Vision (ECCV) on multiple occasions.
\end{IEEEbiography}
\begin{IEEEbiography}
[{\includegraphics[width=1in,height=1.25in,clip,keepaspectratio]{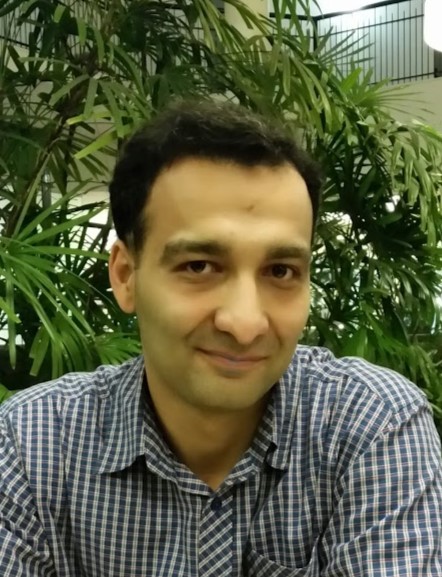}}]{Ghulam Mubashar Hassan} (Senior
Member, IEEE) received the B.S. degree from the University of Engineering and Technology, Peshawar, Pakistan, the M.S. degree from Oklahoma State University, USA, and the Ph.D. degree from The University of Western Australia (UWA). He is currently a faculty member in the Department of Computer Science and Software Engineering at UWA. His research interests include artificial intelligence, machine learning, and their applications in multidisciplinary problems. He is a recipient of multiple teaching excellence and research awards.
\end{IEEEbiography}
\begin{IEEEbiography}
[{\includegraphics[width=1in,height=1.25in,clip,keepaspectratio]{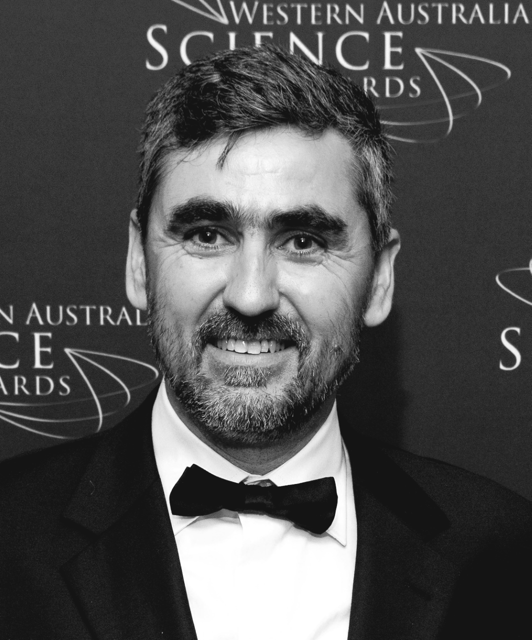}}]{Ajmal Mian}
(Senior Member, IEEE) is a professor of computer science
with the University of Western Australia. He has
received several awards including the West Aus-
tralian Early Career Scientist of the Year Award, the
Aspire Professional Development Award, the Vice-
chancellors Mid-career Research Award, the Out-
standing Young Investigator Award, IAPR Best Sci-
entific Paper Award, EH Thompson Award, and excel-
lence in research supervision award. He has received
several major research grants from the Australian Re-
search Council and the National Health and Medical
Research Council of Australia with a total funding of more than 13 Million.
He has served as an associate editor of IEEE Transactions on Neural Networks
and Learning Systems, IEEE Transactions on Image Processing and the Pattern
Recognition Journal. His research interests include computer vision, machine
learning, 3D shape analysis, human action recognition, video description and
hyperspectral image analysis.
\end{IEEEbiography}
\end{document}